\DeclareMathOperator{\EX}{\mathbb{E}}
\tikzstyle{block}=[draw opacity=0.7,line width=1.4cm]
\definecolor{CranJ}{cmyk}{0,0.69,0.54,0.04} 
\definecolor{PinkJ}{cmyk}{0,0.71,0.43,0.12} 
\definecolor{Cran}{cmyk}{0,0.73,0.41,0.29} 
\definecolor{VRed}{cmyk}{0,0.75,0.25,0.2} 
\definecolor{ORed}{cmyk}{0,0.75,0.75,0} 
\definecolor{CBlue}{cmyk}{1,0.25,0,0} 
\title{\LARGE \bf 
FedScalar: Federated Learning with Scalar Communication for Bandwidth-Constrained Networks
}
\author{Mohammadreza Rostami and Solmaz S. Kia, \emph{Senior Member, IEEE} %
  \thanks{The authors are with the Department of Mechanical and Aerospace Engineering, University of California Irvine, Irvine, CA 92697,  
    {\tt\small \{mrostam2,solmaz\}@uci.edu}. }%
}
\newcommand{\real}{{\mathbb{R}}} \newcommand{\reals}{{\mathbb{R}}}
\newcommand{\realpositive}{{\mathbb{R}}_{>0}}
\newcommand{\realnonnegative}{{\mathbb{R}}_{\ge 0}}
\newcommand{\vect}[1]{\boldsymbol{\mathbf{#1}}}
 \newcommand{\boxend}{\hfill \ensuremath{\Box}}
\newtheorem{thm}{Theorem}[section]
\newtheorem{prop}{Proposition}[section]
\newtheorem{lem}{Lemma}[section]
\newtheorem{defn}{Definition}
\newtheorem{assump}{Assumption}
\newcommand{\oprocendsymbol}{\hbox{$\bullet$}}
\newcommand{\oprocend}{\relax\ifmmode\else\unskip\hfill\fi\oprocendsymbol}
\renewcommand*{\@opargbegintheorem}[3]{\trivlist
      \item[\hskip \labelsep{ #1\ #2}] (#3):\ \itshape}
\begin{document}\fontsize{10}{11.1}\rm

\maketitle
\thispagestyle{empty}
\pagestyle{empty}

\begin{abstract}
In bandwidth-constrained federated learning~(FL) settings, 
the repeated upload of high-dimensional model updates from 
agents to a central server constitutes the primary bottleneck, 
often rendering standard FL infeasible within practical 
communication budgets. We propose \emph{FedScalar}, a 
communication-efficient FL algorithm in which each agent 
uploads only two scalar values per round, regardless of 
the model dimension~$d$. Each agent encodes its local 
update difference as an inner product with a locally 
generated random vector and transmits the resulting scalar 
together with the generating seed, enabling the server to 
reconstruct an unbiased gradient estimate without any 
high-dimensional transmission. We prove that \emph{FedScalar} 
achieves a convergence rate of $O(d/\sqrt{K})$ to a 
stationary point for smooth non-convex loss functions, and 
show that adopting a Rademacher distribution for the random 
vector reduces the aggregation variance compared to the 
Gaussian case. Numerical simulations confirm that the 
dimension-free upload cost translates into significant 
improvements in wall-clock time and energy efficiency over 
\emph{FedAvg} and \emph{QSGD} in bandwidth-constrained 
settings.

\end{abstract}



\section{Introduction}

In recent years, Federated Learning (FL)~\cite{HBM-EM-DR-SH-BAA:17} has emerged
as a dominant paradigm for decentralized machine learning, enabling model training
across distributed agents without necessitating raw data aggregation. This
privacy-preserving framework has seen widespread adoption in high-stakes domains,
enabling scalable and privacy-preserving model training across
heterogeneous devices~\cite{kairouz2021advances, li2020federated}.
It has also recently shown its
relevance and utility in multi-agent data-driven control~\cite{dobbe2020toward, wang2023model} and reinforcement
learning~\cite{zhuo2019federated}. In a canonical FL setting, a group of $N$ agents
communicates with a central server to collaboratively learn the weights
$\mathbf{x} \in \mathbb{R}^d$ of a shared model by jointly solving
\begin{equation}
\label{eq::sum}
    \min_{\vect{x}\in\real^d}f(\vect{x}) := \frac{1}{N}\sum\nolimits_{n=1}^{N}f_n(\vect{x})
\end{equation}
where
$f_n : \mathbb{R}^d \rightarrow \mathbb{R}$ is the local loss function associated
with the private dataset of agent $n$. This collaborative optimization is carried
out without the agents ever explicitly sharing their local data with one another.
To this end, a central server orchestrates the learning process by broadcasting
global model parameters to a subset of clients, each of which performs local
stochastic gradient descent~(SGD) updates and transmits the resulting parameters
back to the server for aggregation, typically via the Federated Averaging~(FedAvg)
algorithm~\cite{HBM-EM-DR-SH-BAA:17}.

\begin{figure}[t]
\centering
    \includegraphics[scale=0.17]{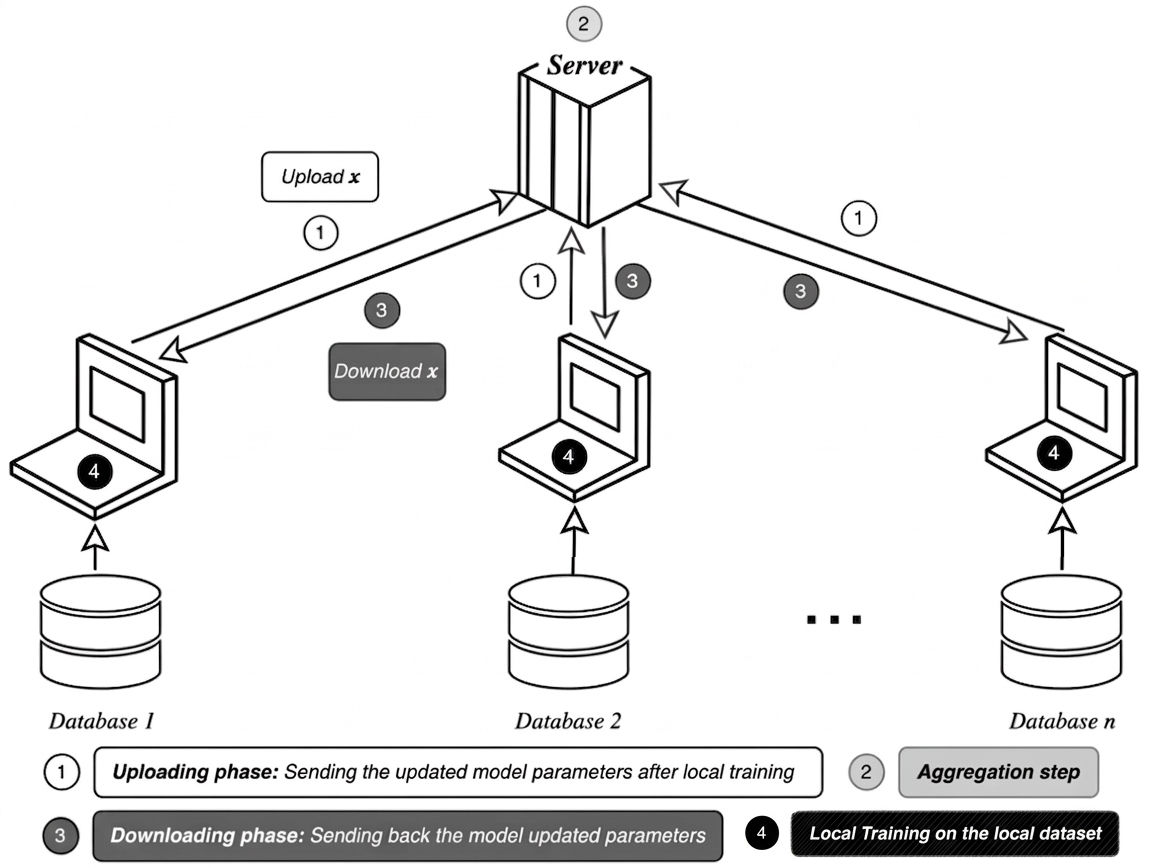}
    \caption{{\small Federated Learning structure where $\vect{x}$ represents the set of server's parameters}.}
    \label{fig::FL}
\end{figure}

\begin{table}[t]
\centering
\caption{{\small Total upload time for $K=500$ rounds, $d=1{,}000$ parameters, $N=20$ agents. Battery budget: $1{,}200$~s. $\dagger$ indicates budget violation.}}
\label{tab::comm_low}
\scriptsize
\renewcommand{\arraystretch}{1.2}
\begin{tabular}{ccccc}
\hline
\textbf{Uplink} & \textbf{Upload} & \multicolumn{2}{c}{\textbf{Total Upload Time} ($K=500$)} \\
\cline{3-4}
\textbf{Bandwidth} & \textbf{Time/Round} & \textbf{Concurrent} & \textbf{TDMA} ($N{=}20$) \\
\hline
$1$~kbps    & $32$~s    & $16{,}000$~s ($4.4$~h)$^\dagger$  & $320{,}000$~s ($88.9$~h)$^\dagger$ \\
$10$~kbps   & $3.2$~s   & $1{,}600$~s ($26.7$~min)$^\dagger$ & $32{,}000$~s ($8.9$~h)$^\dagger$  \\
$50$~kbps   & $0.64$~s  & $320$~s ($5.3$~min)                & $6{,}400$~s ($106.7$~min)$^\dagger$ \\
$100$~kbps  & $0.32$~s  & $160$~s ($2.7$~min)                & $3{,}200$~s ($53.3$~min)$^\dagger$ \\
\hline
\end{tabular}
\end{table}

Despite its success, the primary bottleneck in scaling FL remains the communication overhead between clients and the server~\cite{li2020federated, wang2021field}. As illustrated in Fig.~\ref{fig::FL}, the iterative nature of FL requires repeated download and upload phases. Recent empirical studies suggest that in communication-constrained environments, the time spent on \emph{model parameter upload} can exceed local computation time by orders of magnitude~\cite{li2020federated, wang2021field}. Consider for example a network of $N = 20$ battery-limited edge devices, each with a $20$-minute ($1{,}200$-second) operational budget, collaboratively training a model with $d = 1{,}000$ parameters over a low-power wide-area network~(LPWAN). Table~\ref{tab::comm_low} reports the total upload time over $K = 500$ rounds. At this scale, standard FL becomes nearly impossible for the majority of LPWAN operating points. For instance, under Time Division Multiple Access~(TDMA) scheduling~\cite{goldsmith2005wireless},
where agents transmit sequentially in dedicated time slots, at $10$~kbps, the upload phase alone requires over $8$ hours---exceeding the battery budget by a factor of $26$. Even at $100$~kbps, the scheduling overhead of TDMA consumes nearly an hour, rendering real-time learning and rapid adaptation infeasible.

This bottleneck is severely exacerbated as model complexity grows. In modern ``cross-device'' FL settings, such as
Google's Gboard or Apple's Siri, the system may involve millions of potential
participating devices~\cite{bonawitz2019towards, kairouz2021advances}. Similarity, in large-scale data-driven control applications, a network of embedded agents---such as autonomous drone swarms or robotic manipulators---may collaboratively train a deep neural network~(DNN) controller with $d \approx 10^6$ parameters~\cite{kaufmann2023champion, kownacki2025applying}. In such settings, each agent must upload $32$~Mbits per round at 32-bit precision. Under TDMA with $N = 100$ agents, even a high-end $1$~Gbps link accumulates $3{,}200$~s ($53.3$~min) of upload time over $K = 1{,}000$ rounds, exceeding the $20$-minute mission budget. At a more common $100$~Mbps 4G/LTE uplink, TDMA requires $8.9$~h$^\dagger$, and at $10$~Mbps it reaches $88.9$~h$^\dagger$---more than three orders of magnitude over budget. 

In both scenarios, the high dimensionality of model parameters combined with restricted upload bandwidth leads to prohibitive transmission delays. Consequently, \emph{wall-clock time-to-accuracy}---rather than the total number of iterations---has become the gold-standard metric for evaluating FL efficiency~\cite{li2020federated, wang2021field}.

Strategies to reduce FL communication overhead fall into
three main classes. The first, \emph{local computation and
client selection}, reduces communication frequency through
multiple local SGD steps~\cite{HBM-EM-DR-SH-BAA:17,
karimireddy2020scaffold} or by activating only a subset of
clients per round~\cite{yang2019scheduling, chen2018lag,
rostami2023federated}, complemented by adaptive resource
management schemes~\cite{chen2021communication} and
communication-efficient dual coordinate
methods~\cite{jaggi2014communication, zhang2015disco}.
While effective at reducing the number of communication
events, each participating client must still transmit a
full $d$-dimensional gradient or model vector, which
remains prohibitive for resource-limited edge devices.
The second class, \emph{model compression}, constrains
the global model size through techniques such as local
representation learning~\cite{liang2020think} or low-rank
adaptation~\cite{hu2022lora}, but such structural
constraints limit model expressiveness and may degrade
task performance in general settings. The third and most
directly related class, \emph{gradient compression},
reduces the size of each transmitted update. Quantization
methods such as \texttt{QSGD}~\cite{alistarh2017qsgd},
\texttt{UVeQFed}~\cite{shlezinger2020uveqfed}, and
\texttt{FedPAQ}~\cite{reisizadeh2020fedpaq} represent
gradients at reduced precision, while sparsification
methods transmit only the most informative gradient
components~\cite{lin2017deep, ivkin2019communication}.
Structured and sketched updates compress gradients onto
pre-defined subspaces via projection
matrices~\cite{park2023regulated, guo2024low} or
Count-Sketch~\cite{ivkin2019communication,
rothchild2020fetchsgd}. Critically, all of these methods
rely on \emph{static} projections shared across all
clients and rounds, and none achieves a per-round upload
cost that is independent of $d$.

In this paper, we propose \texttt{FedScalar}, a novel
communication-efficient FL algorithm that breaks the
$\mathcal{O}(d)$ upload barrier by reducing the per-round
communication payload to just two scalar values per agent,
regardless of model dimension $d$. The core idea is to
encode each agent's local update into a single scalar via
a random projection, transmitting only this scalar and a
compact seed (fixed-width integer, 32 bits) that allows the server to reconstruct an
unbiased gradient estimate without any high-dimensional
exchange. While our theoretical analysis establishes a convergence
rate of $\mathcal{O}(d/\sqrt{K})$ for smooth non-convex
functions---which carries a dimension-dependent factor $d$
in comparison to the $\mathcal{O}(1/\sqrt{K})$ rate of
standard \texttt{FedAvg}---this comparison is not directly
applicable under a fixed wall-clock budget: by reducing
the per-round cost from $\mathcal{O}(d)$ to
$\mathcal{O}(1)$, \texttt{FedScalar} affords proportionally
more rounds within the same time window, yielding a net
convergence advantage in bandwidth-constrained settings. We further show that
adopting a Rademacher distribution for the random
projection significantly reduces aggregation variance
compared to the Gaussian case, and validate both claims
through numerical simulations on a classification benchmark. 

\emph{Notations, terminologies and assumptions}: We let $\reals$, $\realpositive$, $\realnonnegative$ denote the set of real, positive real, and non-negative real numbers, respectively. For $\vect{x}\in\reals^d$, $\|\vect{x}\|=\sqrt{\vect{x}^\top\vect{x}}$ denotes the Euclidean norm, and $\langle \vect{x}, \vect{y} \rangle$ denotes the inner product. A function $f: \real^d \rightarrow \real$ is $\mathsf{L}$-Lipschitz smooth if $\|\nabla f(\vect{x})-\nabla f(\vect{y})\| \leq \mathsf{L} \|\vect{x}-\vect{y}\|$. We utilize Jensen's inequality: $\EX[\|\frac{1}{N}\sum_{n=1}^{N}\vect{x}_n\|^2] \leq \frac{1}{N}\sum_{n=1}^{N} \EX[\|\vect{x}_n\|^2]$.
Throughout this paper, we make the following assumptions about the cost function in \eqref{eq::sum}.
\begin{assump}[$\mathsf{L}$-smoothness of the cost function]\label{assump::smoothness}
The cost function in~\eqref{eq::sum} has $\mathsf{L}$-Lipschitz continuous gradient with constant $\mathsf{L}\in\real_{>0}$, i.e., 
\begin{equation}
    \|\nabla f(\vect{x}) - \nabla f(\vect{{\hat{\vect{x}}}})\| \leq \mathsf{L}\|\vect{x} - \hat{\vect{x}}\|
\end{equation}
    for any $\vect{x}, \hat{\vect{x}} \in\real^d$.
    \boxend
\end{assump}

\begin{algorithm}[t]
\caption{\textsc{FedScalar Algorithm
}}
\label{alg2_new}
{\small
\begin{algorithmic}[1]

\State \textbf{Server executes:}
\State \textbf{Input} $\vect{x}_0 \in \mathbf{R}^d$, rounds $K$
\For{each round $k = 0, 1, \ldots, K - 1$}

    \For{each client $n \in N$ \textbf{in parallel}}
        \State $r_{n}^k, \xi_{k,n} \leftarrow$ \textsc{ClientStage}$(n, \vect{x}_k)$
    \EndFor

        \State $\vect{\Delta}_{\text{sum}} \gets \vect{0}_d$  \Comment{reset the estimator}
    
    \For{each client \( n \in N\)} 
    \vspace{0.1in}
    \State Server generates vector \( \vect{v}_{k,n}  \sim  \mathcal{N}(\vect{0}_{d},\, \vect{I}_d) \) using seed \( \xi_{k,n} \) 

    \vspace{0.035in}
    \State   $\vect{\Delta}_{\text{sum}} 
         \;\gets\;
         \vect{\Delta}_{\text{sum}}  
         + 
         r_n^k\,\vect{v}_{k,n}$
 \EndFor
  
    \State \textbf{Aggregate:}\; $ \displaystyle \hat{\vect g}(\vect{x}_k) = \frac{1}{ N}  \vect{\Delta}_{\text{sum}} $
    \State  \( \vect{x}_{k+1} = \vect{x}_k + \hat{\vect{g}}(\vect{x}_k) \)
\EndFor
\smallskip
\smallskip
\Procedure{ClientStage}{$n, \vect{x}$} 
    \State $\vect{\psi}_{k,0}^n = \vect{x}$
\State{Sample $\vect{v}_{k,n} \sim \mathcal{N}(\vect{0}_{d},\, \vect{I}_d)$ using seed $\xi_{k,n}$}
    \For {s $\leftarrow 0,..., \emph{S} - 1$ }
        \State $\vect{\psi}_{k,s + 1}^n = \vect{\psi}_{k,s}^n - \alpha h_n(\vect{\psi}_{k,s}^n)$
    \EndFor
    \State $\delta_n^k = \vect{\psi}_{k, S}^n - \vect{\psi}_{k, 0}^n$
    \smallskip
    \State {$r_n^k = \langle \delta_n^{k}, \vect{v}_{k,n}\rangle$}
    \State \textbf{return} $r_n^k \in \real$ and $ \xi_{k,n} \in \mathbb{Z}$ to server
\EndProcedure
\end{algorithmic}
}
\end{algorithm}

\section{Communication-Efficient Federated Learning}\label{com-eff}
The central bottleneck in standard FL is that each active agent must upload a $d$-dimensional vector to the server every round, which is prohibitive when $d$ is large. The key idea behind our proposed \texttt{FedScalar}  algorithm (Algorithm~\ref{alg2_new}) is to replace this high-dimensional upload with a single scalar, without losing the ability to construct an unbiased estimate of the gradient at the server. To achieve this, each agent $n$ encodes its local update difference $\delta_n^k = \boldsymbol{\psi}_{k,S}^n - \boldsymbol{\psi}_{k,0}^n$---the shift in the local model after $S$ steps of local SGD---into a scalar by taking its inner product with a shared random vector $\mathbf{v}_{k,n} \sim \mathcal{N}(\mathbf{0}_d, \mathbf{I}_d)$ at line 23 of Algorithm~\ref{alg2_new}:
\begin{equation}
    r_n^k = \langle \delta_n^k,\, \mathbf{v}_{k,n} \rangle \in \mathbb{R}.
\end{equation}
The agent transmits only $r_n^k$ and the integer seed $\xi_{k,n}$ used to generate $\mathbf{v}_{k,n}$, making the upload payload independent of the model dimension $d$ (only two scalars uploaded). At the server, the received scalar is decoded by projecting it back onto the regenerated random vector. Because the server holds the seed $\xi_{k,n}$, it can independently regenerate the identical $\mathbf{v}_{k,n}$ without any additional communication. The server then forms the reconstructed update contribution $r_n^k \mathbf{v}_{k,n} \in \mathbb{R}^d$ and aggregates across all active agents:
\begin{equation}
    \hat{\mathbf{g}}(\mathbf{x}_k) = \frac{1}{N} \sum\nolimits_{n \in N} r_n^k \mathbf{v}_{k,n},
\end{equation}
The global model is then updated as $\mathbf{x}_{k+1} = \mathbf{x}_k + \hat{\mathbf{g}}(\mathbf{x}_k)$.

The correctness of this encoding-decoding scheme rests on a fundamental property of projected directional derivatives along random vectors, which we establish next.

\begin{lem}[Unbiasedness of the projected directional derivative along a random vector $\mathbf{v}$~\cite{rostami2024forward}]\label{lem::unbias}
Let $\mathbf{v} \in \mathbb{R}^d$ be a random vector with each entry $v_i$ being independent and identically distributed with zero mean and unit variance. Then, the projected directional derivative along $\mathbf{v}$ is an unbiased estimate of $\nabla f(\mathbf{x})$, i.e.,
\begin{equation}
    \mathbb{E}\bigl[\langle \mathbf{v}, \nabla f(\mathbf{x})\rangle \mathbf{v}\bigr] = \nabla f(\mathbf{x}).
\end{equation}
\end{lem}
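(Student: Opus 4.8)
The plan is to compute the expectation $\EX[\langle \vect{v}, \nabla f(\vect{x})\rangle \vect{v}]$ componentwise, using the independence and moment assumptions on the entries of $\vect{v}$. For a fixed $\vect{x}$, write $\vect{g} = \nabla f(\vect{x}) \in \real^d$, which is a deterministic vector once we condition on $\vect{x}$. Then $\langle \vect{v}, \vect{g}\rangle = \sum_{j=1}^d v_j g_j$, so the $i$-th component of the random vector $\langle \vect{v}, \vect{g}\rangle \vect{v}$ is $\left(\sum_{j=1}^d v_j g_j\right) v_i = \sum_{j=1}^d g_j\, v_i v_j$.

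Next I would take expectations and push the (finite) sum outside: $\EX\big[(\langle \vect{v}, \vect{g}\rangle \vect{v})_i\big] = \sum_{j=1}^d g_j\, \EX[v_i v_j]$. The key observation is that, by independence of the entries together with zero mean, $\EX[v_i v_j] = \EX[v_i]\EX[v_j] = 0$ for $i \neq j$, while $\EX[v_i^2] = \operatorname{Var}(v_i) = 1$ for $i = j$ (here the unit-variance hypothesis is exactly what makes the diagonal terms equal to one). Hence $\EX[v_i v_j] = \delta_{ij}$, and the sum collapses to $\sum_{j=1}^d g_j \delta_{ij} = g_i$. Since this holds for every component $i \in \{1,\dots,d\}$, we conclude $\EX[\langle \vect{v}, \nabla f(\vect{x})\rangle \vect{v}] = \nabla f(\vect{x})$, which is the claim. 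Equivalently, and more compactly, one can note $\EX[\vect{v}\vect{v}^\top] = I_d$ under these assumptions, and then $\EX[\langle \vect{v}, \vect{g}\rangle \vect{v}] = \EX[\vect{v}\vect{v}^\top]\vect{g} = I_d \vect{g} = \vect{g}$.

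There is essentially no hard step here: the result is a one-line second-moment computation, and the only thing to be careful about is the bookkeeping — making sure $\nabla f(\vect{x})$ is treated as deterministic (independent of the randomness in $\vect{v}$, which is the standing setup since $\vect{v}_k$ is sampled fresh by the server), interchanging the finite sum with the expectation (always valid), and correctly invoking independence for the off-diagonal terms and unit variance for the diagonal terms. The cited reference \cite{rostami2024forward} presumably contains the same argument, so I would simply reproduce this short computation. If one wanted the cleanest presentation, I would state the matrix identity $\EX[\vect{v}\vect{v}^\top] = I_d$ as the single substantive fact and derive everything from it.
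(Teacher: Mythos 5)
Your proposal is correct and matches the paper's argument, which is exactly the one-line identity $\EX[\langle \vect{v}, \nabla f(\vect{x})\rangle \vect{v}] = \EX[\vect{v}\vect{v}^\top]\nabla f(\vect{x}) = \mathbf{I}_d \nabla f(\vect{x})$ using independence of $\vect{v}$ from $\nabla f(\vect{x})$; your componentwise expansion is just a more explicit rendering of the same second-moment computation.
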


Lemma~\ref{lem::unbias} guarantees that the scalar encoding preserves gradient information in expectation: the projection $r_n^k \mathbf{v}_{k,n} = \langle \delta_n^k, \mathbf{v}_{k,n}\rangle \mathbf{v}_{k,n}$, with $\mathbf{v}_{k,n}$ sampled independently of the local update $\delta_n^k$, is an unbiased estimator of $\delta_n^k$, and hence the aggregated update $\hat{\mathbf{g}}(\mathbf{x}_k)$ provides an unbiased signal for descent. The following lemma additionally bounds the second moment of this estimator, which directly controls the variance introduced by the projection step and will appear explicitly in the convergence analysis.

\begin{lem}[Upper bound on the projected directional derivative along a random vector $\mathbf{v}$~\cite{nesterov2017random}]\label{upper_bound_forward}
Let $\mathbf{v} \in \mathbb{R}^d$ be a random vector with each entry $v_i$ being independent and identically distributed according to $\mathcal{N}(0,1)$. Then,
\begin{equation}
    \mathbb{E}\bigl[\|\langle \mathbf{v}, \nabla f(\mathbf{x})\rangle \mathbf{v}\|^2\bigr] \leq (d+4)\|\nabla f(\mathbf{x})\|^2.
\end{equation}
\end{lem}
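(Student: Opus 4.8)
The plan is to reduce the claim to a single fourth-order moment computation for $\vect{v}$ and evaluate it directly in coordinates. Fix $\vect{x}$ and abbreviate $\vect{g} := \nabla f(\vect{x})$, which is a fixed vector once $\vect{x}$ is given; since $\|\langle \vect{v},\vect{g}\rangle \vect{v}\|^2 = \langle \vect{v},\vect{g}\rangle^2\,\|\vect{v}\|^2$, the lemma is equivalent to $\EX[\langle \vect{v},\vect{g}\rangle^2\,\|\vect{v}\|^2] \le (d+4)\|\vect{g}\|^2$. Writing $\langle \vect{v},\vect{g}\rangle^2 = \sum_{i,j} v_i v_j g_i g_j$ and $\|\vect{v}\|^2 = \sum_\ell v_\ell^2$, I would expand the product into the triple sum $\sum_{i,j,\ell} g_i g_j\,\EX[v_i v_j v_\ell^2]$ and classify the terms according to which of the indices $i,j,\ell$ coincide.

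Using independence of the entries together with $\EX[v_i]=0$ and $\EX[v_i^2]=1$, every term in which some index appears to an odd total power vanishes. In particular all off-diagonal contributions with $i\neq j$ drop out: whether $\ell$ equals $i$, equals $j$, or neither, each such term carries a factor of the form $\EX[v_i^3]\EX[v_j]=0$ or $\EX[v_i]\EX[v_j]=0$. The only surviving terms are (i) $i=j=\ell$, contributing $\sum_i g_i^2\,\EX[v_i^4]$, and (ii) $i=j\neq\ell$, contributing $\sum_i g_i^2 \sum_{\ell\neq i}\EX[v_i^2]\EX[v_\ell^2] = (d-1)\|\vect{g}\|^2$. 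Hence one obtains the exact identity $\EX[\langle \vect{v},\vect{g}\rangle^2\,\|\vect{v}\|^2] = \sum_i g_i^2\,\EX[v_i^4] + (d-1)\|\vect{g}\|^2$.

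It then remains only to control the fourth-moment term. For the Gaussian choice $\vect{v}\sim\mathcal{N}(\vect{0}_d, I_d)$ used by the server in Algorithm~\ref{alg2_new} we have $\EX[v_i^4]=3$, which already delivers the sharper bound $(d+2)\|\vect{g}\|^2 \le (d+4)\|\vect{g}\|^2$; more generally, any i.i.d.\ unit-variance distribution with $\EX[v_i^4]\le 5$ yields exactly the stated constant. A shorter route for the Gaussian case avoids the coordinate bookkeeping altogether: by rotational invariance of $\mathcal{N}(\vect{0}_d,I_d)$ one may assume $\vect{g}=\|\vect{g}\|\,\vect{e}_1$, and then $\EX[v_1^2\,\|\vect{v}\|^2] = \EX[v_1^4] + (d-1) = d+2 \le d+4$.

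The only genuine subtlety I anticipate is the dependence of the constant on $\EX[v_i^4]$: stated for an arbitrary unit-variance i.i.d.\ vector the inequality actually requires a bounded-kurtosis hypothesis, since a heavy-tailed unit-variance distribution can make $\sum_i g_i^2\,\EX[v_i^4]$ arbitrarily large. The proof should therefore either specialize to the Gaussian $\vect{v}$ that the algorithm in fact uses, or explicitly invoke a fourth-moment condition; beyond that point everything is the routine moment expansion sketched above, after which the result follows by multiplying through by $\|\vect{g}\|^2$ and recalling $\vect{g}=\nabla f(\vect{x})$.
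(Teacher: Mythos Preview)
The paper does not supply a proof of this lemma; it is quoted from \cite{nesterov2017random}, where the bound is established specifically for standard Gaussian $\vect{v}$ via Gaussian moment estimates. Your coordinate expansion is correct and in fact more informative than a mere citation: it produces the exact identity $\EX[\langle \vect{v},\vect{g}\rangle^{2}\|\vect{v}\|^{2}] = (\EX[v_1^{4}]+d-1)\|\vect{g}\|^{2}$ for any i.i.d.\ law, giving the sharper constants $d+2$ in the Gaussian case and $d$ in the Rademacher case---precisely the two distributions the paper actually uses in Algorithm~\ref{alg2_new} and Proposition~\ref{var_reduce}.

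Your closing caveat is the substantive point and is correct: as stated here, the lemma asserts the bound for \emph{arbitrary} i.i.d.\ zero-mean unit-variance entries, which is false without a fourth-moment hypothesis, since one can keep the variance equal to one while making $\EX[v_1^{4}]$ arbitrarily large. The Nesterov--Spokoiny argument is Gaussian-specific and does not cover this generality, and the paper never adds the missing kurtosis assumption. So your proof is in fact more complete than what the paper offers: it establishes the inequality under the hypothesis that is genuinely needed ($\EX[v_1^{4}]\le 5$, satisfied by both Gaussian and Rademacher) and correctly flags that the lemma's stated hypotheses are insufficient.
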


Lemma~\ref{upper_bound_forward} reveals a dimension-dependent factor $d$ in the variance of the projection estimator. This is the origin of the $\mathcal{O}(d/\sqrt{K})$ convergence rate established in Theorem~\ref{thm::convergence} below, and motivates the variance reduction strategy discussed~below.

We also require the following standard assumptions on the stochastic gradients $ h_n$ used during local training. 

\begin{assump}[Unbiased stochastic gradients]\label{Assump:2}
For every stochastic gradient $h_n$ of $f_n$, for any $\mathbf{x} \in \mathbb{R}^d$ we have
\begin{equation}
    \mathbb{E}\bigl[ h_n(\mathbf{x}) \mid \mathbf{x}\bigr] = \nabla f_n(\mathbf{x}), \quad n \in \{1,\ldots,N\}.
\end{equation}
\end{assump}

\begin{assump}[Bounded stochastic gradients]\label{eq:bound}
For every stochastic gradient $h_n$ of $f_n$, there exists $G \in \mathbb{R}_{>0}$ such that
\begin{equation}
    \mathbb{E}\bigl[\|h_n(\mathbf{x})\|^2\bigr] \leq G^2, \quad n \in \{1,\ldots,N\},
\end{equation}
for any $\mathbf{x} \in \mathbb{R}^d$.
\end{assump}

These assumptions are standard in the FL literature~\cite{rostami2023federated, liu2020improved}. Together with the $L$-smoothness of $f$ (Assumption~1), they enable the following convergence guarantee for \texttt{FedScalar}.

 \begin{thm}[Convergence bound of Algorithm \ref{alg2_new} for non-convex loss functions]\label{thm::convergence}
Let $\alpha = \frac{1}{\sqrt{K}}$, and  Assumptions \ref{assump::smoothness}, \ref{Assump:2} and \ref{eq:bound} hold. Then Algorithm~\ref{alg2_new} converges to the stationary point of problem \eqref{eq::sum} with a rate of $O(d/\sqrt{K})$, satisfying the following upper bound
\begin{align}\label{result:thm1_1}
    \frac{1}{K} \sum\nolimits_{k=0}^{K-1} \EX[\|\nabla &f(\vect{x}_{k})\|^2]  \leq \frac{2}{ \sqrt{K} S} (f(\vect{x}_{0}) - f^\star)  \nonumber \\ &+ \frac{\mathsf{L}^2 S^2 G^2}{K}   
    +\frac{\mathsf{L}(d + 4)S G^2}{\sqrt{K}}.
\end{align}
where $f^\star$ is the optimal solution to \eqref{eq::sum}.\boxend
\end{thm}
The proof of Theorem \ref{thm::convergence} is presented in the appendix. 
The first term in the bound~\eqref{result:thm1_1} can be regarded as a the optimality gap, which diminishes by increase $K$ and $S$. On the other hand, the second and third terms on the left-hand side of the bound in~\eqref{result:thm1_1} correspond to the variance arising from the local SGD update steps during the \emph{ClientStage} phase and the projection onto the random vector $\vect{v}$, respectively. These observations can be deduced from the analysis of the proof and the results in Lemma~\ref{lemma:var_v}. Interestingly, unlike the first term in the convergence bound~\eqref{result:thm1_1}, these two terms increase with the number of total SGD update steps, $S$. Specifically, while the optimality gap decreases on the order of $O(1/S)$, the variance-related terms increase on the orders of $O(S^2)$ and $O(S)$, respectively. The magnitude of these variance-related error terms can be mitigated by incorporating appropriate variance reduction techniques, which will be discussed in the next subsection. 

While the convergence rate $\mathcal{O}(d/\sqrt{K})$
in~\eqref{result:thm1_1} carries a dimension-dependent
factor $d$ in comparison to the
$\mathcal{O}(1/\sqrt{K})$ rate of standard
\texttt{FedAvg}, this comparison is not directly
applicable under a fixed wall-clock time budget $T$.
In practice, per-round transmission delay is compounded
by uplink contention, network congestion, and scheduling
overhead among agents---effects that scale with the size
of the transmitted payload. Since \texttt{FedScalar}
reduces the per-round upload from $\mathcal{O}(d)$ bits
to just two scalars, it can complete substantially more
rounds within the same wall-clock budget, effectively
mitigating the dimension-dependent factor in the
convergence bound. To fully eliminate the residual
$d$-dependence, one possible approach is to transmit
a small number $m \ll d$ of independent projections
per agent, recovering a dimension-free
$\mathcal{O}(1/\sqrt{K})$ rate at a modest
$\mathcal{O}(m)$ upload cost; we leave the systematic
study of this projection-variance tradeoff to future work.

\subsection{Variance Reduction}
As established in the preceding remarks, the two variance-related
terms in the convergence bound~\eqref{result:thm1_1} can be mitigated
by targeting their respective sources. To reduce the variance
introduced during the local SGD updates, existing variance reduction
methods such as SVRG or SAG~\cite{johnson2013accelerating,
schmidt2017minimizing} can be incorporated inside \emph{FedScalar}.
However, for the sake of brevity, we do not explore these methods
in this paper.

To reduce the variance due to the projection onto the random vector
$\vect{v}$, we need an alternative approach. Notice that $\vect{v}$
is a random variable drawn from a distribution with zero mean and
unit variance. However, this choice is not the only possibility.
Indeed, the choice of the underlying distribution for $\vect{v}$ is
crucial and can serve as a mechanism to reduce the subsequent
convergence error in \emph{FedScalar}. Our result below, given in
Proposition~\ref{var_reduce}, shows that if the underlying
distribution of $\vect{v}$ is chosen to be the Rademacher
distribution instead of the normal distribution, the variance
during aggregation can be reduced.

\begin{defn}[Rademacher distribution~\cite{degroot2012probability}]
Rademacher distribution for a random vector $\vect{v}$ is defined as
$\vect{v} \in \{-1, +1\}^d$ with $\mathbb{P}(v_{i} = 1) =
\mathbb{P}(v_{i} = -1) = 0.5$, $\forall i \in \{1, 2, \dots, d\}$.
\boxend
\end{defn}

Note that the Rademacher distribution still satisfies
\begin{align} \label{prop_radem}
    \mathbb{E}[\vect{v}] = \vect{0}, \quad
    \mathbb{E}[\vect{v}\vect{v}^\top] = \mathbf{I}_d,
\end{align}
ensuring that the results in Lemma~\ref{lem::unbias}
and~\ref{upper_bound_forward} remain valid. Although the variances
of the Rademacher and standard normal distributions are both $1$,
their collective behavior in the context of sums can exhibit
different characteristics. In particular, the bounded nature of
Rademacher variables makes them less susceptible to extreme outliers
compared to the normal distribution, and certain sum or averaging
processes using Rademacher variables can exhibit reduced variance
due to the properties of finite sets and discrete
outcomes~\cite{degroot2012probability}.

\begin{prop}[Reducing the variance by changing the distribution of
$\vect{v}_{k,n}$ in \emph{FedScalar}] \label{var_reduce}
Consider Algorithm~\ref{alg2_new} where the random vector
$\vect{v}_{k,n}$ is sampled from the Rademacher distribution, i.e.,
$\vect{v}_{k,n} \in \{-1, +1\}^d$ with $\mathbb{P}(v_{k,n,i} = 1)
= \mathbb{P}(v_{k,n,i} = -1) = 0.5$, $\forall i \in \{1, 2, \dots,
d\}$. Then, the variance in the aggregation step of
\emph{FedScalar} decreases by $\frac{2}{N^2} \sum_{n=1}^N
\|\delta_n^k\|^2 \mathbf{I}_d$, i.e.,
\begin{align}
    \mathrm{Var}_{\vect{v}_{k,n} \sim \mathcal{N}(\mathbf{0},
    \mathbf{I}_d)}[\mathbf{d}_{\vect{x}_k}] &-
    \mathrm{Var}_{\vect{v}_{k,n} \sim
    \mathrm{Rademacher}^d}[\mathbf{d}_{\vect{x}_k}] \nonumber \\
    &= \frac{2}{N^2} \sum\nolimits_{n=1}^N \|\delta_n^k\|^2
    \mathbf{I}_d,
\end{align}
where $\mathbf{d}_{\vect{x}_k} = \vect{x}_{k+1} - \vect{x}_k$.
\boxend
\end{prop}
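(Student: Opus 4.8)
\emph{Proof strategy.} The plan is to evaluate the conditional covariance of the server increment $\mathbf{d}_{\vect{x}_k}=\vect{x}_{k+1}-\vect{x}_k$ directly, holding the local differences $\delta_1^k,\dots,\delta_N^k$ fixed and taking expectation only over the server's random vector $\vect{v}_k$. From line~9 of Algorithm~\ref{alg2_new} together with $r_n^k=\langle\delta_n^k,\vect{v}_k\rangle$ we have
\[
\mathbf{d}_{\vect{x}_k}=\frac1N\sum_{n=1}^N\langle\delta_n^k,\vect{v}_k\rangle\,\vect{v}_k=\frac1N\sum_{n=1}^N\bigl(\vect{v}_k\vect{v}_k^\top\bigr)\delta_n^k .
\]
By Lemma~\ref{lem::unbias}, which is valid for both candidate laws since each only uses $\EX[\vect{v}_k]=\vect{0}$ and $\EX[\vect{v}_k\vect{v}_k^\top]=\mathbf{I}_d$, the mean $\EX[\mathbf{d}_{\vect{x}_k}]=\frac1N\sum_{n}\delta_n^k$ is the same under the normal and the Rademacher distribution. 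Hence the centering term cancels in the difference $\text{Var}_{\mathcal N}[\mathbf{d}_{\vect{x}_k}]-\text{Var}_{\text{Rad}}[\mathbf{d}_{\vect{x}_k}]$, and it suffices to compare the raw second moments, i.e.\ the per-agent contributions $\EX[\langle\delta_n^k,\vect{v}_k\rangle^2\,\vect{v}_k\vect{v}_k^\top]$, under the two laws.

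The key computational step is the fourth-moment tensor of a vector $\vect{v}$ with i.i.d.\ symmetric, zero-mean, unit-variance coordinates: $\EX[v_iv_jv_kv_l]$ equals the three-pairing (Isserlis/Wick) sum $\delta_{ij}\delta_{kl}+\delta_{ik}\delta_{jl}+\delta_{il}\delta_{jk}$ off the main diagonal and equals $\EX[v_1^4]$ on the diagonal $i=j=k=l$, where $\EX[v_1^4]=3$ for $\mathcal N(0,1)$ and $\EX[v_1^4]=1$ for the Rademacher law. Substituting this into $\EX[\langle a,\vect{v}\rangle^2\vect{v}\vect{v}^\top]$ for a fixed $a\in\real^d$ gives
\[
\EX\bigl[\langle a,\vect{v}\rangle^2\,\vect{v}\vect{v}^\top\bigr]=\|a\|^2\mathbf{I}_d+2\,aa^\top-\bigl(3-\EX[v_1^4]\bigr)\,\Diag{a_1^2,\dots,a_d^2},
\]
so the two distributions agree on every term except the last, diagonal one. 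Applying this with $a=\delta_n^k$, subtracting the Rademacher value (coefficient $2$) from the Gaussian value (coefficient $0$), weighting by $1/N^2$ and summing over $n$ as dictated by line~9, and then using the elementary relaxation $\Diag{(\delta^k_{n,1})^2,\dots,(\delta^k_{n,d})^2}\preceq\|\delta_n^k\|^2\mathbf{I}_d$, one obtains
\[
\text{Var}_{\mathcal N}[\mathbf{d}_{\vect{x}_k}]-\text{Var}_{\text{Rad}}[\mathbf{d}_{\vect{x}_k}]=\frac2{N^2}\sum_{n=1}^N\|\delta_n^k\|^2\,\mathbf{I}_d ,
\]
which is the claimed identity.

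I expect the bookkeeping of the rank-$4$ moment tensor in $d$ dimensions to be the main obstacle: one must collect the three Wick pairings carefully and verify that they coincide for the two laws, so that the entire distribution-dependent discrepancy is carried by the single diagonal correction term with coefficient $3-\EX[v_1^4]$. A secondary point requiring care is the passage from the exact diagonal matrix $\Diag{(\delta^k_{n,i})^2}$ to the isotropic form $\|\delta_n^k\|^2\mathbf{I}_d$, together with the choice of whether to aggregate the $N$ agent covariances term by term (which produces the $1/N^2$ prefactor in the stated form) or to track the full cross-agent double sum; both routes are routine once the moment identity above is established.
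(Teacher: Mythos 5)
Your route is essentially the paper's: expand the fourth\-/moment tensor $\EX[v_iv_jv_mv_p]$, note that the Gaussian and Rademacher laws agree on every pairing except the fully diagonal index set $i=j=m=p$ (fourth moment $3$ versus $1$), and let the mean term cancel since both laws satisfy $\EX[\vect{v}_k]=\vect{0}$ and $\EX[\vect{v}_k\vect{v}_k^\top]=\mathbf{I}_d$. Your intermediate identity
\[
\EX\bigl[\langle a,\vect{v}\rangle^2\,\vect{v}\vect{v}^\top\bigr]=\|a\|^2\mathbf{I}_d+2\,aa^\top-\bigl(3-\EX[v_1^4]\bigr)\Diag{a_1^2,\dots,a_d^2}
\]
is in fact the correctly bookkept version of the paper's four-case computation (the paper's cases overlap on the diagonal and are not disjointly summed).

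The gap is in your final step. Your own identity gives, per agent, a difference of $2\,\Diag{(\delta_{n,1}^k)^2,\dots,(\delta_{n,d}^k)^2}$, a diagonal matrix of squared coordinates, whereas the proposition asserts the isotropic matrix $2\|\delta_n^k\|^2\mathbf{I}_d$. These coincide only when all coordinates of $\delta_n^k$ have equal magnitude, and the ``elementary relaxation'' $\Diag{(\delta_{n,i}^k)^2}\preceq\|\delta_n^k\|^2\mathbf{I}_d$ is a one-sided operator inequality; it cannot upgrade your diagonal identity to the claimed matrix equality. As written you prove a statement whose trace agrees with the proposition but whose matrix form does not. (The paper's proof makes the same silent substitution $\sum_i a_i^2\mathbf{e}_i\mathbf{e}_i^\top\mapsto\|a\|^2\mathbf{I}_d$ in its Case~4; you at least flag the step, but flagging it does not close it.) A second point you dismiss as routine is not: since $\mathbf{d}_{\vect{x}_k}=\frac1N\bigl(\sum_n r_n^k\bigr)\vect{v}_k$, the second moment involves $\sum_{n,n'}r_n^kr_{n'}^k$, and the cross terms $n\neq n'$ carry their own distribution-dependent diagonal contribution $(\EX[v_1^4]-3)\sum_i\delta_{n,i}^k\delta_{n',i}^k\,\mathbf{e}_i\mathbf{e}_i^\top$; keeping them turns the difference into $2\,\Diag{}$ of the squared coordinates of $\bar{\delta}^k=\frac1N\sum_n\delta_n^k$ rather than the per-agent sum, so the two aggregation routes you treat as interchangeable give different answers.
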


The proof of Proposition~\ref{var_reduce} is presented in the
appendix. According to this result, changing the underlying
distribution of $\vect{v}$ to the Rademacher distribution reduces
the aggregation variance by $\frac{2}{N^2} \sum_{n=1}^N
\|\delta_n^k\|^2 \mathbf{I}_d$, where $\|\delta_n^k\|^2$ grows with
the number of local SGD steps $S$ performed during the
\emph{ClientStage} phase. This motivates the use of the Rademacher
distribution in practice, particularly when $S$ is large.

\section{Numerical Simulations}\label{sec::num}
We evaluate \emph{FedScalar} (Algorithm~\ref{alg2_new}) on multiclass classification using the \emph{Digits} dataset from \texttt{sklearn}, which consists of $8\times 8$ grayscale images and thus $64$ input features. We employ a neural network with two hidden layers of widths $24$ and $12$, resulting in a model with approximately $d\approx 2000$ trainable parameters. The dataset is distributed across $N=20$ agents. In the experiment, we perform $K=1500$ communication rounds with $S=5$ local steps, batch size $32$, and stepsize $\alpha=0.003$. Results are averaged over $10$ runs. We compare \emph{FedScalar} against two standard baselines: \emph{FedAvg}~\cite{HBM-EM-DR-SH-BAA:17} and the 8-bit quantization-based \emph{QSGD}~\cite{alistarh2017qsgd}.

\begin{figure}[t]
\centering
    \includegraphics[scale=0.25]{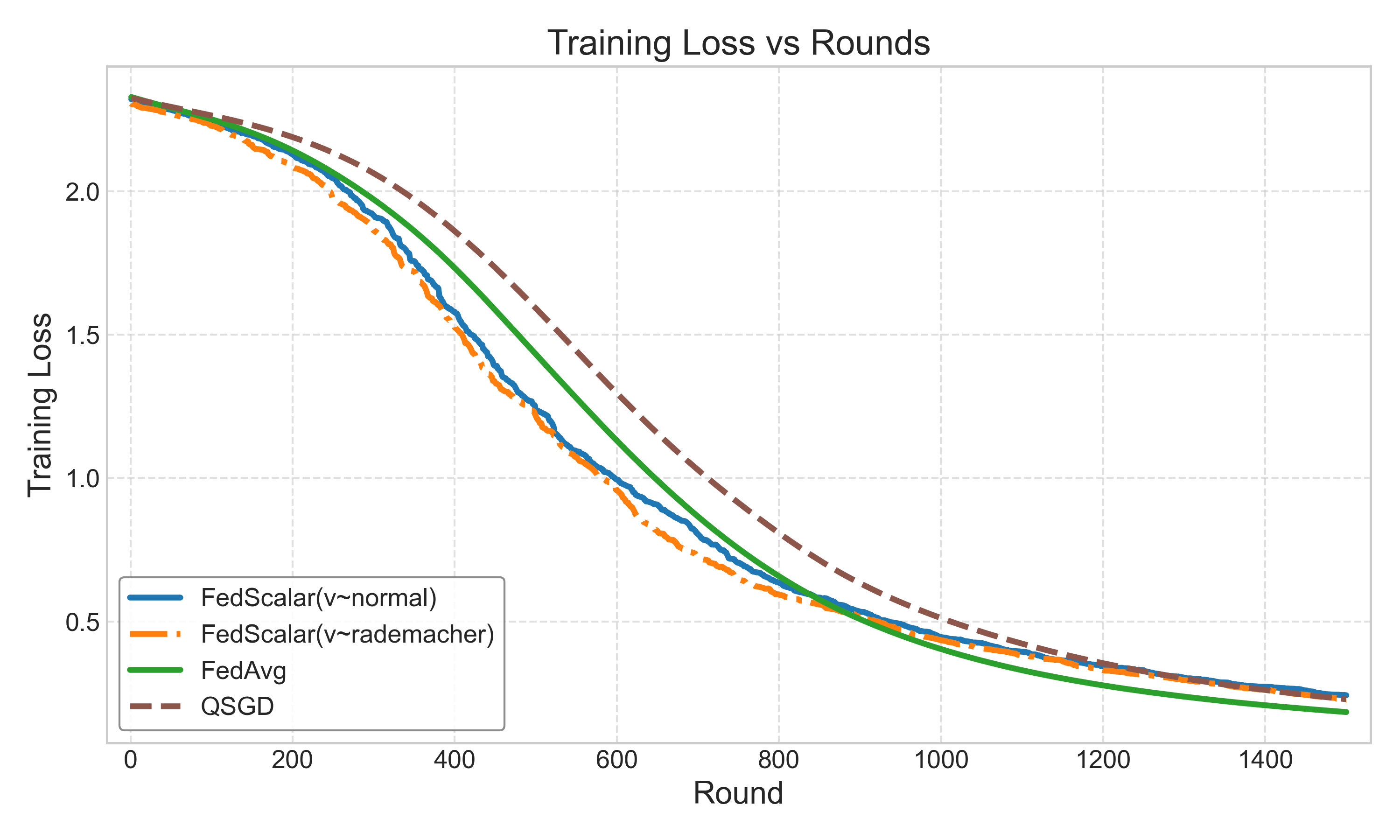}
    \caption{{\small Training loss vs.\ round (iteration)  for Algorithm~\ref{alg2_new} with $\vect{v}_k$ sampled from a normal and a Rademacher distribution, compared with \emph{FedAvg} and \emph{QSGD}.}}
    \label{loss_plot}
\end{figure} 

\begin{figure}[t]
\centering
    \includegraphics[scale=0.25]{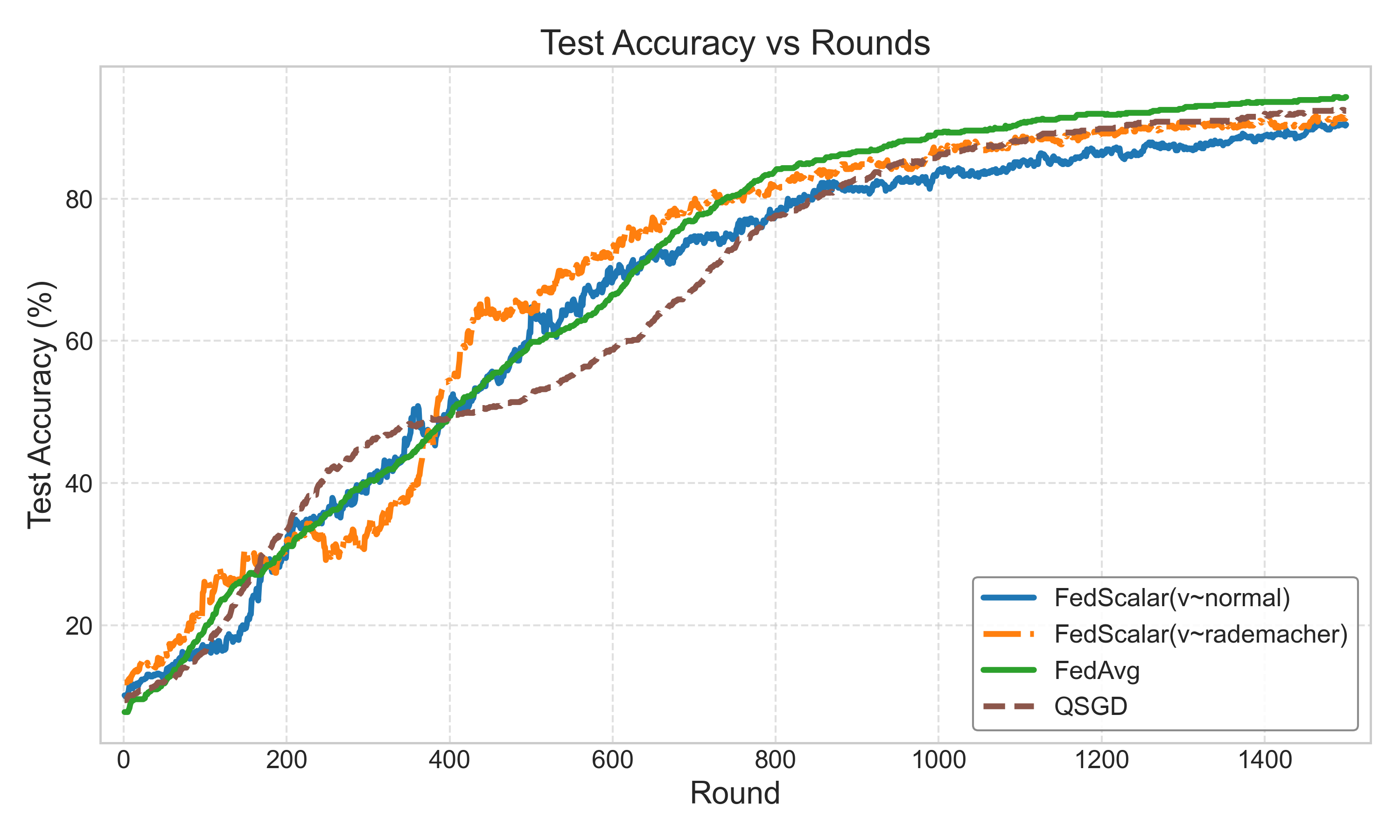}
    \caption{{\small Test accuracy vs.\ round (iteration) for  Algorithm~\ref{alg2_new} with $\vect{v}_k$ sampled from a normal and a Rademacher distribution, compared with \emph{FedAvg} and \emph{QSGD}.}}
    \label{acc_plot}
\end{figure} 

To capture system-level performance, we model both wall-clock time and communication energy. The per-round time is given by
\begin{equation}
T_{\text{wall}}^{(k)} = T_{\text{other}}^{(k)} + \frac{B_{\text{upload}}^{(k)}}{R^{(k)}},
\end{equation}

where $B_{\text{upload}}$ denotes the number of transmitted bits, and $R$ represents the uplink bandwidth (in bits per second). In all experiments, we assume $32$ bits are used to represent each floating-point value when computing $B_{\text{upload}}$. The term $B_{\text{upload}}/R$ corresponds to the upload time, while $T_{\text{other}}$ accounts for additional delays such as local computation and system overhead. In our experiments, we set the nominal uplink bandwidth to $0.1$ Mbps, corresponding to a weak or bandwidth-constrained communication regime typical in edge networks, and incorporate multiplicative log-normal variability to reflect realistic channel fluctuations. We model $T_{\text{other}}$ as a fraction of the FedAvg upload time. This modeling choice captures bandwidth-constrained edge scenarios, where communication dominates the end-to-end latency. It allows us to directly quantify the system-level benefits of communication-efficient methods, emphasizing the wall-clock time improvements achieved by \emph{FedScalar} in low-capacity regimes.

We further quantify energy consumption using a standard communication energy model \cite{bjornson2018energy}:
\begin{equation}\label{enegy_eq}
E_{\text{round}} = P_{\text{tx}} \cdot \frac{B_{\text{upload}}}{R},
\end{equation}
where $P_{\text{tx}}$ is the transmit power. In our setup, we set $P_{\text{tx}} = 2$ Watts, which is representative of energy usage in low-power edge devices. This model captures the dominant energy cost associated with wireless transmission during federated learning.

\begin{figure}[t]
\centering
    \includegraphics[scale=0.25]{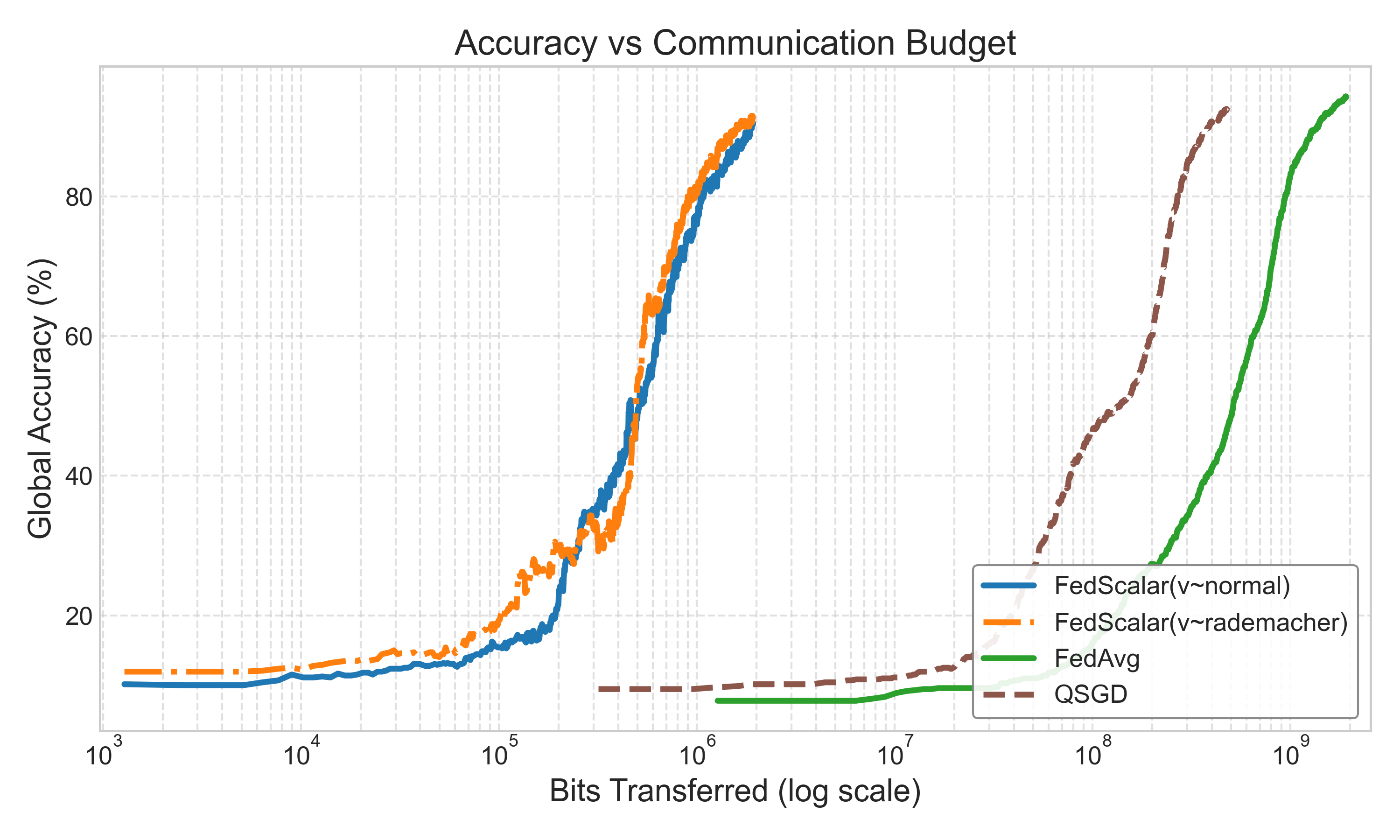}
    \caption{{\small  Accuracy vs. communication budget. The x-axis shows the cumulative number of bits transmitted to the server by all $N=20$ clients over $K=1500$ communication rounds. We compare \emph{FedScalar} (normal and Rademacher sampling) with \emph{FedAvg} and \emph{QSGD}.}}
    \label{accuracy_vs_bits}
\end{figure} 

\begin{figure}[t]
\centering
    \includegraphics[scale=0.25]{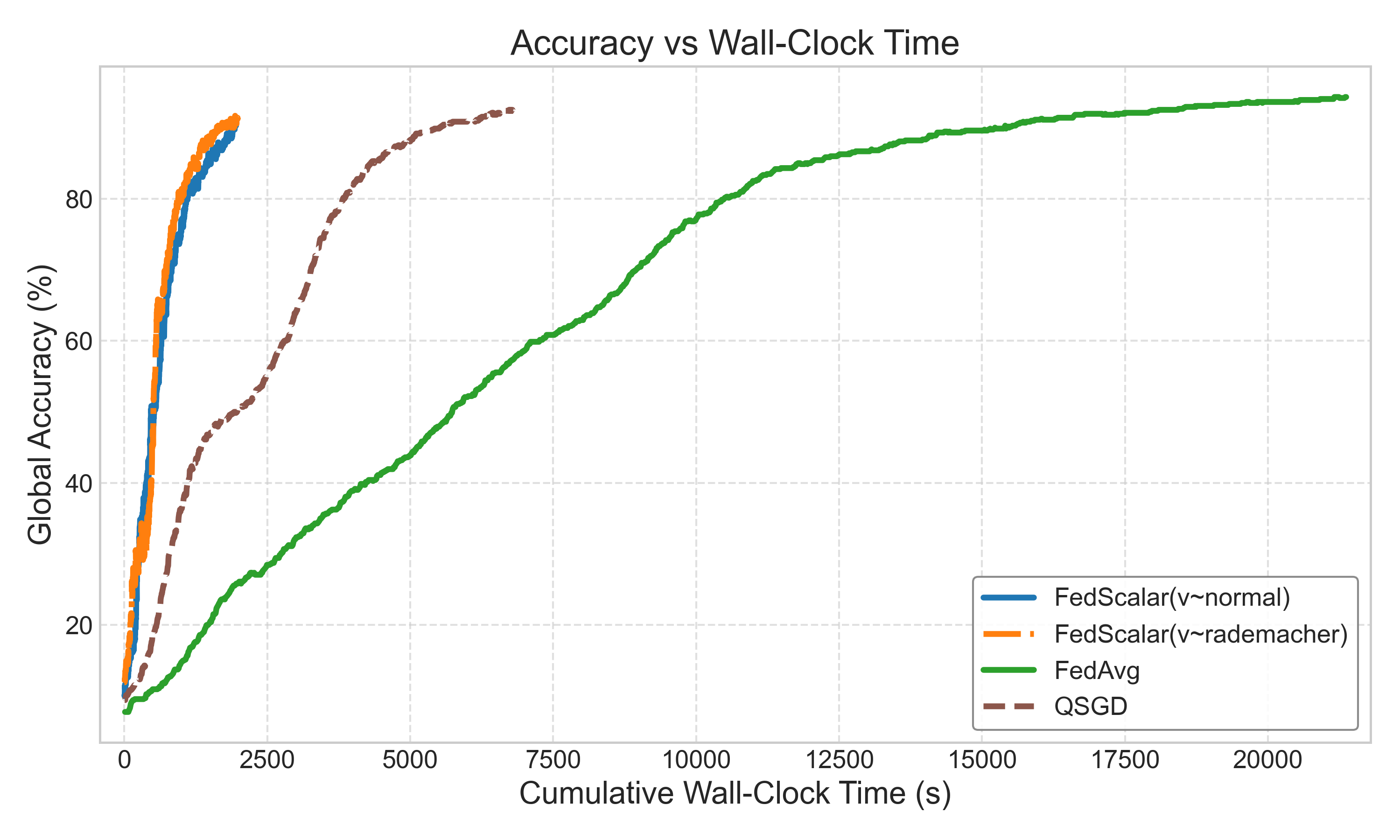}
    \caption{{\small Accuracy vs. wall-clock time. The x-axis shows the cumulative runtime across $K=1500$ communication rounds. We compare \emph{FedScalar} (normal and Rademacher sampling) with \emph{FedAvg} and \emph{QSGD}.}}
    \label{accuracy_vs_time}
\end{figure} 

\begin{figure}[t]
\centering
    \includegraphics[scale=0.25]{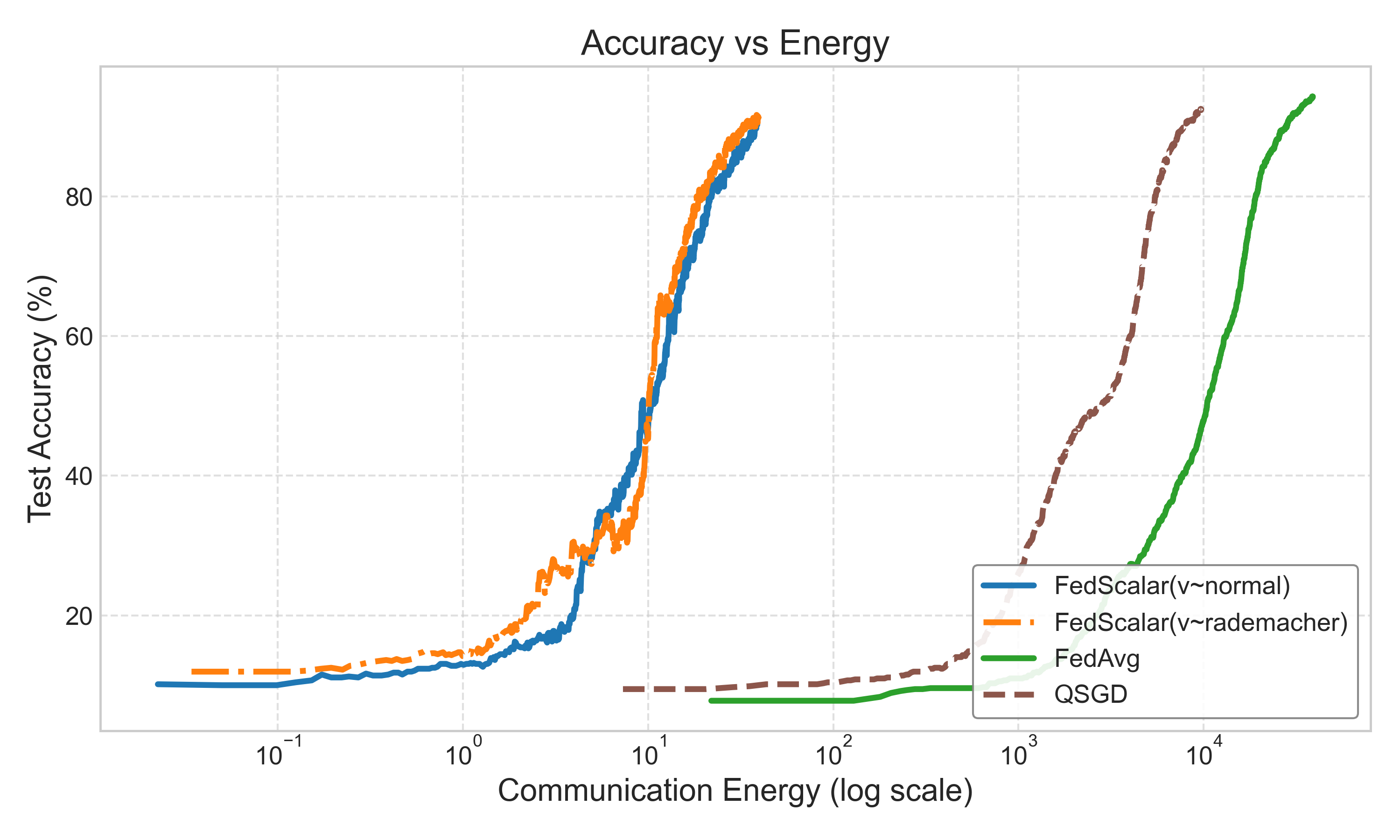}
    \caption{{\small Test accuracy vs.\ communication energy (log scale) for Algorithm~\ref{alg2_new} with $\vect{v}_k$ sampled from normal and Rademacher distributions, compared with \emph{FedAvg} and \emph{QSGD}.}}
    \label{accuracy_vs_energy}
\end{figure}

First, Figs.~\ref{loss_plot} and \ref{acc_plot} confirm that \emph{FedScalar} maintains strong optimization performance, achieving rapid convergence in both training loss and test accuracy. Moreover, the variant of \emph{FedScalar} with $\vect{v}_k$ sampled from a Rademacher distribution consistently outperforms the Gaussian variant, which aligns with the variance reduction effect established in Proposition~\ref{var_reduce}.

Building on this, Figs.~\ref{accuracy_vs_bits}, \ref{accuracy_vs_time}, and \ref{accuracy_vs_energy} highlight the primary advantages of the proposed \emph{FedScalar} algorithm in terms of communication, latency, and energy efficiency. From Fig.~\ref{accuracy_vs_bits}, \emph{FedScalar} achieves over $90\%$ test accuracy using approximately $10^5$--$10^6$ transmitted bits, whereas \emph{FedAvg} and \emph{QSGD} require on the order of $10^8$--$10^9$ bits to reach comparable accuracy levels. To provide a direct comparison, at a communication budget of approximately $10^6$ bits, \emph{FedScalar} already attains above ${90\%}$ accuracy, while both \emph{FedAvg} and \emph{QSGD} remain below ${10\%}$. Notably, for \emph{FedAvg}, this budget is insufficient to even transmit a single full model update per client (of dimension $d\approx 2000$), which severely limits its ability to make progress under such a constraint. This significant reduction stems from the communication structure of the methods: \emph{FedAvg} transmits a full $d$-dimensional model update per client (with $d\approx 2000$ in our setup), while \emph{FedScalar} communicates only two scalars per client (the projected value and a seed). As a result, \emph{FedScalar} replaces the transmission of a $d$-dimensional update with only two scalars, independent of the ambient dimension $d$, substantially reducing the per-client communication cost per round compared to \emph{FedAvg} and \emph{QSGD}.

Fig.~\ref{accuracy_vs_time} shows that these communication savings directly translate into wall-clock time improvements. In particular, at approximately $t \approx 1250$ seconds, \emph{FedScalar} achieves ${84.44\%}$ accuracy, while \emph{FedAvg} and \emph{QSGD} achieve only ${17.64\%}$ and ${43.33\%}$, respectively. This substantial performance gap highlights the advantage of \emph{FedScalar} in latency-constrained settings. Similarly, Fig.~\ref{accuracy_vs_energy} shows that communication reduction directly translates into energy savings, as predicted by~\eqref{enegy_eq}. Accordingly, the trends closely mirror those in Fig.~\ref{accuracy_vs_bits}. At comparable energy levels around $50$ Joules, \emph{FedScalar} achieves ${91.39\%}$ accuracy, while \emph{FedAvg} and \emph{QSGD} achieve only ${7.78\%}$ and ${10.14\%}$, respectively, demonstrating the superior energy efficiency of \emph{FedScalar} for energy-constrained, low-capacity edge devices.

\section{Conclusions}\label{sec::conclu}
We proposed \emph{FedScalar}, a communication-efficient federated learning algorithm that reduces the upload payload to two scalar values per agent per round. Each agent encodes its local update difference via an inner product with a shared random vector and transmits the resulting scalar along with the generating seed, allowing the server to reconstruct an unbiased gradient estimate without any high-dimensional transmission. We proved that \emph{FedScalar} achieves a convergence rate of $O(d/\sqrt{K})$ to a stationary point for smooth non-convex functions, and showed that adopting a Rademacher distribution for the random vector reduces the aggregation variance by $\frac{2}{N^2}\sum_{n=1}^{N}\|\delta_n^k\|^2 \mathbf{I}_d$ compared to the Gaussian case. Numerical simulations confirm that these theoretical gains translate into significant improvements in wall-clock time and energy efficiency over \emph{FedAvg} and QSGD in bandwidth-constrained settings. Future work will investigate accelerated variants of \emph{FedScalar} to address the $d$-dependent variance in the convergence rate, specifically by exploring the use of multiple random projections per agent. Additionally, we will study its application to privacy-preserving federated learning, where encoding updates into a single scalar provides a natural barrier against parameter inference attacks.

\bibliographystyle{ieeetr}%
\bibliography{bib/alias,bib/Reference} 

\vspace{-0.1in}
\appendix
\renewcommand{\theequation}{A.\arabic{equation}}
\renewcommand{\thethm}{A.\arabic{thm}}
\renewcommand{\thelem}{A.\arabic{lem}}
\renewcommand{\thedefn}{A.\arabic{defn}}

This section presents the proofs of the results in the paper and the auxiliary lemmas used in these proofs. 
\begin{lem}\label{lemma1}
Consider Algorithm~\ref{alg2_new}. Then, we have  
{\small
\begin{align}\label{eq::10_new}
-\EX[\big \langle \nabla f(\vect{x}_{k}), \frac{1}{N} &\sum\nolimits_{n=1}^N \nabla f_n(\vect{\psi}_{k,s}^n) \big \rangle] \leq -\frac{1}{2} \EX[\|\nabla f(\vect{x}_{k})\|^2]  \nonumber \\
&+ \frac{ \mathsf{L}^2 S \alpha^2}{2N} \sum_{n=1}^N \sum_{s^{\prime}=0}^{s-1}\EX[\| h_n(\vect{\psi}_{k,s^{\prime}}^n)\|^2].
\end{align}
}
\end{lem}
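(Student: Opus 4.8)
The plan is to derive \eqref{eq::10_new} by an add-and-subtract step, Young's inequality, Jensen's inequality, and the Lipschitz-gradient property, all carried out pointwise (before any expectation is taken). First I would use the identity $\frac{1}{N}\sum_{n=1}^N \nabla f_n(\vect{x}_k) = \nabla f(\vect{x}_k)$ to write
\[
-\big\langle \nabla f(\vect{x}_k),\, \tfrac{1}{N}\sum_{n=1}^N \nabla f_n(\vect{\psi}_{k,s}^n)\big\rangle = -\|\nabla f(\vect{x}_k)\|^2 + \big\langle \nabla f(\vect{x}_k),\, \tfrac{1}{N}\sum_{n=1}^N\!\big(\nabla f_n(\vect{x}_k)-\nabla f_n(\vect{\psi}_{k,s}^n)\big)\big\rangle.
\]
Applying $\langle\vect{a},\vect{b}\rangle \le \tfrac12\|\vect{a}\|^2+\tfrac12\|\vect{b}\|^2$ to the remaining inner product with $\vect{a}=\nabla f(\vect{x}_k)$ cancels half of the $-\|\nabla f(\vect{x}_k)\|^2$ term and leaves $-\tfrac12\|\nabla f(\vect{x}_k)\|^2 + \tfrac12\big\|\tfrac1N\sum_{n=1}^N(\nabla f_n(\vect{x}_k)-\nabla f_n(\vect{\psi}_{k,s}^n))\big\|^2$.

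Next I would bound the remaining squared norm. Jensen's inequality gives $\big\|\tfrac1N\sum_n(\nabla f_n(\vect{x}_k)-\nabla f_n(\vect{\psi}_{k,s}^n))\big\|^2 \le \tfrac1N\sum_n\|\nabla f_n(\vect{x}_k)-\nabla f_n(\vect{\psi}_{k,s}^n)\|^2$, and the $\mathsf{L}$-Lipschitz continuity of the gradients (Assumption~\ref{assump::smoothness}, invoked for each local $f_n$) upgrades the right-hand side to $\tfrac{\mathsf{L}^2}{N}\sum_n\|\vect{x}_k-\vect{\psi}_{k,s}^n\|^2$. To control $\|\vect{x}_k-\vect{\psi}_{k,s}^n\|^2$ I would telescope the \textsc{ClientStage} recursion: since $\vect{\psi}_{k,0}^n=\vect{x}_k$ and $\vect{\psi}_{k,s'+1}^n=\vect{\psi}_{k,s'}^n-\alpha h_n(\vect{\psi}_{k,s'}^n)$, we get $\vect{x}_k-\vect{\psi}_{k,s}^n = \alpha\sum_{s'=0}^{s-1}h_n(\vect{\psi}_{k,s'}^n)$; one further use of Jensen (Cauchy--Schwarz on the $s$-term sum, then the bound $s\le S$) yields $\|\vect{x}_k-\vect{\psi}_{k,s}^n\|^2 \le \alpha^2 S\sum_{s'=0}^{s-1}\|h_n(\vect{\psi}_{k,s'}^n)\|^2$.

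Finally I would chain the three estimates and take expectations on both sides (using monotonicity of $\EX$), which produces exactly \eqref{eq::10_new}. I do not expect a genuine obstacle here; the only care points are that the Lipschitz-gradient property is used per agent $f_n$ rather than only for $f$ (standard and implicit in the FL setting), the conservative replacement of the number $s$ of completed inner steps by its maximum $S$ so that the stated coefficient $\tfrac{\mathsf{L}^2 S\alpha^2}{2N}$ is recovered, and assembling the pointwise inequalities in the correct order before passing to the expectation.
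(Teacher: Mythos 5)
Your proposal is correct and follows essentially the same route as the paper's proof: the same add-and-subtract of $\nabla f(\vect{x}_k)$, Young's inequality, Jensen to pull the $\frac{1}{N^2}$ into $\frac{1}{N}$, per-agent Lipschitz gradients, telescoping the \textsc{ClientStage} recursion, and one more Jensen step with $s \le S$ to obtain the coefficient $\frac{\mathsf{L}^2 S \alpha^2}{2N}$. The only cosmetic difference is that you work pointwise and take expectations at the end, while the paper carries the expectation through each step; the inequalities are identical.
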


\begin{proof}
Adding and subtracting $\nabla f(\vect{x}_{k})$ results in
{\small
\begin{align}
&-\EX[\big \langle \nabla f(\vect{x}_{k}), \frac{1}{N} \sum\nolimits_{n=1}^N \nabla f_n(\vect{\psi}_{k,s}^n) \big \rangle] 
=-\EX[\big \langle\nabla f(\vect{x}_{k}), \nonumber \\
&\frac{1}{N} \sum\nolimits_{n=1}^N \nabla f_n(\vect{\psi}_{k,s}^n) +  \nabla f(\vect{x}_{k}) -  \nabla f(\vect{x}_{k}) \big \rangle] \nonumber \\
&=  \EX[\big \langle\nabla f(\vect{x}_{k}), \nabla f(\vect{x}_{k}) - \frac{1}{N} \sum_{n=1}^N \nabla f_n(\vect{\psi}_{k,s}^n) \big \rangle] \nonumber \\
& - \EX[\big \langle\nabla f(\vect{x}_{k}), \nabla f(\vect{x}_{k})\big \rangle], \nonumber \\
&\leq \frac{1}{2} \EX[\|\nabla f(\vect{x}_{k})\|^2] + \frac{1}{2} \EX [\|\nabla f(\vect{x}_{k}) \nonumber \\
&- \frac{1}{N} \sum\nolimits_{n=1}^N \nabla f_n(\vect{\psi}_{k,s}^n)\|^2] - \EX[\big \langle\nabla f(\vect{x}_{k}),  \nabla f(\vect{x}_{k})\big \rangle], \nonumber \\
&=-\frac{1}{2} \EX[\|\nabla f(\vect{x}_{k})\|^2] + \frac{1}{2} \EX [\| \frac{1}{N} \sum_{n=1}^N \nabla f_n(\vect{x}_{k}) \nonumber \\
&- \frac{1}{N}\sum_{n=1}^N \nabla f_n(\vect{\psi}_{k,s}^n)\|^2], \nonumber \\
&=-\frac{1}{2} \EX[\|\nabla f(\vect{x}_{k}) \|^2] + \frac{1}{2 N^2} \EX [\| \sum_{n=1}^N \nabla f_n(\vect{x}_{k}) \nonumber \\
&- \nabla f_n(\vect{\psi}_{k,s}^n)\|^2], \nonumber \\
&\leq-\frac{1}{2} \EX[\|\nabla f(\vect{x}_{k})\|^2] + \frac{1}{2N}\sum_{n=1}^N \EX [\| \nabla f_n(\vect{x}_{k}) \nonumber \\
&- \nabla f_n(\vect{\psi}_{k,s}^n)\|^2], \nonumber \\
&\leq-\frac{1}{2} \EX[\|\nabla f(\vect{x}_{k})\|^2] + \frac{ \mathsf{L}^2}{2N} \sum_{n=1}^N \EX[\|\vect{x}_k - \vect{\psi}_{k,s}^n \|^2],\nonumber \\
&=-\frac{1}{2} \EX[\|\nabla f(\vect{x}_{k})\|^2] + \frac{ \mathsf{L}^2}{2N} \sum_{n=1}^N \EX[\|\sum_{s^{\prime}=0}^{s-1} \alpha h_n(\vect{\psi}_{k,s^{\prime}}^n) \|^2], \nonumber \\
&\leq-\frac{1}{2} \EX[\|\nabla f(\vect{x}_{k})\|^2] + \frac{ \mathsf{L}^2 S \alpha^2}{2N} \sum_{n=1}^N \sum_{s^{\prime}=0}^{s-1}\EX[\| h_n(\vect{\psi}_{k,s^{\prime}}^n)\|^2], \nonumber 
\end{align}
}
in the second and forth inequality follows from the Jensen's inequality, and the third inequality comes from the Liptizeness of the gradient. The last inequality follows from the Jensen's inequality. 
\end{proof}

\begin{lem}\label{lemma:var_v}
    Consider Algorithm \ref{alg2_new}. Then, we have 
  {\small
    \begin{align}\label{eq:var_v}
    \frac{\mathsf{L}}{2N}\sum\nolimits_{n=1}^N& \EX\big[\| r_{n}^k \vect{v}_{k,n}\|^2\big] \leq  \\
    &\frac{\alpha^2 \mathsf{L} (d + 4) S }{2N}\sum_{n=1}^N \sum\nolimits_{s=0}^{S-1} \EX [\|h_n(\vect{\psi}_{k, s}^{n}) \|^2].\nonumber 
     \end{align}
     }
Moreover, if we invoke Assumption \ref{eq:bound}, we get 
{\small
\begin{align}\label{eq:var_v_2}
    \frac{\mathsf{L}}{2N}\sum\nolimits_{n=1}^N& \EX\big[\| r_{n}^k \vect{v}_{k,n}\|^2\big] \leq \frac{\alpha^2 \mathsf{L} (d + 4) S^2 G^2}{2}.
     \end{align}
     }
\end{lem}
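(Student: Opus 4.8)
The plan is to unwind the definition of $r_n^k$, reduce the left-hand side to an instance of Lemma~\ref{upper_bound_forward}, and then bound the resulting quantity $\|\delta_n^k\|^2$ by the local SGD increments. First I would recall from the \textsc{ClientStage} procedure that $\delta_n^k = \vect{\psi}_{k,S}^n - \vect{\psi}_{k,0}^n = -\alpha\sum_{s=0}^{S-1} h_n(\vect{\psi}_{k,s}^n)$ and $r_n^k = \langle \delta_n^k, \vect{v}_k\rangle$, so that $r_n^k \vect{v}_k = \langle \vect{v}_k, \delta_n^k\rangle \vect{v}_k$. The key structural observation is that the random vector $\vect{v}_k$ is sampled by the server independently of the local iterates $\vect{\psi}_{k,s}^n$ (the client SGD recursion never uses $\vect{v}_k$), hence $\delta_n^k$ is independent of $\vect{v}_k$.

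Conditioning on the $\sigma$-algebra generated by the client-side randomness — so that $\delta_n^k$ is frozen and plays the role of a deterministic vector — I would invoke Lemma~\ref{upper_bound_forward} with $\delta_n^k$ in place of $\nabla f(\vect{x})$, giving $\EX\big[\|\langle \vect{v}_k, \delta_n^k\rangle \vect{v}_k\|^2 \,\big|\, \delta_n^k\big] \le (d+4)\|\delta_n^k\|^2$. Taking total expectation then yields $\EX[\|r_n^k \vect{v}_k\|^2] \le (d+4)\,\EX[\|\delta_n^k\|^2]$.

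Next I would estimate $\|\delta_n^k\|^2 = \alpha^2\big\|\sum_{s=0}^{S-1} h_n(\vect{\psi}_{k,s}^n)\big\|^2 \le \alpha^2 S \sum_{s=0}^{S-1}\|h_n(\vect{\psi}_{k,s}^n)\|^2$ by Jensen's inequality applied to the sum of $S$ terms (exactly as in Lemma~\ref{lemma1}). Substituting this bound, summing over $n$, and multiplying through by $\mathsf{L}/(2N)$ gives \eqref{eq:var_v}. Finally, invoking Assumption~\ref{eq:bound} to replace each $\EX[\|h_n(\vect{\psi}_{k,s}^n)\|^2]$ by $G^2$ collapses the double sum over $s$ and $n$ to $N S G^2$, which produces $\frac{\alpha^2 \mathsf{L}(d+4)S}{2N}\cdot N S G^2 = \frac{\alpha^2 \mathsf{L}(d+4)S^2 G^2}{2}$, i.e.\ \eqref{eq:var_v_2}.

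The only delicate point is the conditioning step that legitimizes applying Lemma~\ref{upper_bound_forward} to the \emph{random} vector $\delta_n^k$: one must state explicitly that $\delta_n^k$ carries no dependence on $\vect{v}_k$, so that the inner expectation over $\vect{v}_k$ alone is precisely the situation covered by that lemma, after which the tower property finishes the argument. Everything else is a routine use of Jensen's inequality together with the bounded-gradient assumption.
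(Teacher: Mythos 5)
Your proposal is correct and follows essentially the same route as the paper's proof: apply Lemma~\ref{upper_bound_forward} with $\delta_n^k$ in place of the gradient to get $\EX[\|r_n^k\vect{v}_k\|^2]\le (d+4)\EX[\|\delta_n^k\|^2]$, expand $\delta_n^k=-\alpha\sum_{s=0}^{S-1}h_n(\vect{\psi}_{k,s}^n)$, and finish with Jensen's inequality and Assumption~\ref{eq:bound}. Your explicit conditioning argument justifying the application of that lemma to the random vector $\delta_n^k$ (via independence from $\vect{v}_k$ and the tower property) is a detail the paper leaves implicit, and is a welcome addition rather than a deviation.
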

\vspace{0.2in}
\begin{proof}
Invoking Lemma \ref{upper_bound_forward} results in
{\small
    \begin{align}
 &\frac{\mathsf{L}}{2N}\sum\nolimits_{n=1}^N \EX\big[\| r_{n}^k \vect{v}_{k,n}\|^2\big]  \leq \frac{\mathsf{L} (d + 4)}{2N}\sum_{n=1}^N  \EX [\| \delta_n^k \|^2] \nonumber \\
    &=\frac{\alpha^2 \mathsf{L} (d + 4)}{2N}\sum_{n=1}^N  \EX [\| \sum_{s=0}^{S-1}  h_n(\vect{\psi}_{k, s}^{n}) \|^2] \nonumber \\
    &\leq  \frac{\alpha^2 \mathsf{L} (d + 4) S }{2N}\sum_{n=1}^N \sum\nolimits_{s=0}^{S-1} \EX [\|h_n(\vect{\psi}_{k, s}^{n}) \|^2] 
\end{align}
}
where the last inequality follows from Jensen's inequality.
\end{proof}

\begin{proof}[Proof of Lemma~\ref{lem::unbias}]
The proof follows from 
$\EX[ \langle \vect{v}, \nabla f (\vect{x})\rangle \vect{v}] = \EX[\vect{v}\vect{v^\top}]\nabla f (\vect{x}) = \vect{I}_n \nabla f (\vect{x}) = \nabla f (\vect{x}),$ where we use the fact that a random vector $\vect{v}$ and $\nabla f (\vect{x})$ are independent of each other.
\end{proof}

\begin{proof}[Proof of Theorem~\ref{thm::convergence}]
    Since $f$ is $\mathsf{L}-smooth$, we can write
\begin{align}\label{eq::1}
     \EX[f(\vect{x}_{k+1})|\vect{x}_{k}] &\leq f(\vect{x}_{k})+\big \langle\nabla f(\vect{x}_{k}), \EX[\vect{x}_{k+1} - \vect{x}_{k}|\vect{x}_{k}] \big \rangle \nonumber \\
    &+\frac{\mathsf{L}}{2}\EX\big[\|\vect{x}_{k+1} - \vect{x}_{k}\|^2\big| \vect{x}_{k}\big],
\end{align}
plugging back the the result from lines 9-14 of Algorithm \ref{alg2_new} into \eqref{eq::1}, we have 
{\small
\begin{align*}
     &\EX[f(\vect{x}_{k+1})|\vect{x}_{k}] \leq f(\vect{x}_{k})+ \big \langle\nabla f(\vect{x}_{k}), \EX[\frac{1}{N} \sum\nolimits_{n=1}^N r_{n}^k \vect{v}_{k,n}|\vect{x}_{k}] \big \rangle \nonumber \\
    &+\frac{\mathsf{L}}{2}\EX\big[\|\frac{1}{N}\sum\nolimits_{n=1}^N r_{n}^k \vect{v}_{k,n}\|^2\big| \vect{x}_{k}\big],\nonumber\\
    &= f(\vect{x}_{k}) +  \big \langle\nabla f(\vect{x}_{k}), \EX[\frac{1}{N} \sum\nolimits_{n=1}^N \delta_n^k|\vect{x}_{k}] \big \rangle \nonumber \\
    &+\frac{\mathsf{L}}{2}\EX\big[\|\frac{1}{N} \sum\nolimits_{n=1}^N r_{n}^k \vect{v}_{k,n}\|^2\big| \vect{x}_{k}\big],\nonumber \\
     &= f(\vect{x}_{k}) -  \big \langle\nabla f(\vect{x}_{k}), \EX[\frac{1}{N} \sum\nolimits_{n=1}^N \sum\nolimits_{s=0}^{S-1} \alpha h_n(\vect{\psi}_{k, s}^{n}) |\vect{x}_{k}] \big \rangle \nonumber \\
    &+\frac{\mathsf{L}}{2}\EX\big[\|\frac{1}{N} \sum\nolimits_{n=1}^N r_{n}^k \vect{v}_{k,n}\|^2\big| \vect{x}_{k}\big],\nonumber \\
    &\leq f(\vect{x}_{k}) -  \big \langle\nabla f(\vect{x}_{k}), \EX[ \frac{1}{N} \sum\nolimits_{n=1}^N \sum\nolimits_{s=0}^{S-1} \alpha h_n(\vect{\psi}_{k, s}^{n}) |\vect{x}_{k}] \big \rangle \nonumber \\
    &+\frac{\mathsf{L}}{2N}\sum\nolimits_{n=1}^N \EX\big[\| r_{n}^k \vect{v}_{k,n}\|^2\big| \vect{x}_{k}\big], \\
    \end{align*}
}
where the last inequality follows from Jensen's inequality. Using Assumption \ref{Assump:2} into the last inequality, we have
{\small    \begin{align*}
    & \EX[f(\vect{x}_{k+1})|\vect{x}_{k}] \leq f(\vect{x}_{k}) \nonumber \\
     &-  \big \langle\nabla f(\vect{x}_{k}), \EX[ \frac{1}{N} \sum\nolimits_{n=1}^N \sum\nolimits_{s=0}^{S-1} \alpha \nabla f_n(\vect{\psi}_{k, s}^{n}) |\vect{x}_{k}] \big \rangle \nonumber \\
    &+\frac{\mathsf{L}}{2N}\sum\nolimits_{n=1}^N \EX\big[\| r_{n}^k \vect{v}_{k,n}\|^2\big| \vect{x}_{k}\big] \\
      &=f(\vect{x}_{k}) -  \alpha \sum\nolimits_{s=0}^{S-1} \!\!\EX [\big \langle\nabla f(\vect{x}_{k}), \frac{1}{N}\!\sum\nolimits_{n=1}^N \!\! \nabla f_n(\vect{\psi}_{k, s}^{n}) \big \rangle |\vect{x}_{k}] \nonumber \\
    &+\frac{\mathsf{L}}{2N}\sum\nolimits_{n=1}^N \EX\big[\| r_{n}^k \vect{v}_{k,n}\|^2\big| \vect{x}_{k}\big] 
\end{align*}
}
Taking total expectation on both sides we have 
{\small
\begin{align}\label{eq::9_new}
 &\EX[f(\vect{x}_{k+1})] \leq \EX[f(\vect{x}_{k})] \nonumber \\&
 -  \alpha \sum\nolimits_{s=0}^{S-1} \EX [\big \langle\nabla f(\vect{x}_{k}), \frac{1}{N} \sum\nolimits_{n=1}^N  \nabla f_n(\vect{\psi}_{k, s}^{n}) \big \rangle ] \nonumber \\
    &+\frac{\mathsf{L}}{2N}\sum\nolimits_{n=1}^N \EX\big[\| r_{n}^k \vect{v}_{k,n}\|^2\big].
\end{align}
}

Invoking Lemma \ref{lemma1} and \ref{lemma:var_v} into \eqref{eq::9_new}, we have
{\small
\begin{align}
 \EX[f&(\vect{x}_{k+1})] 
    \leq \EX[f(\vect{x}_{k})] -\frac{\alpha S}{2} \EX[\|\nabla f(\vect{x}_{k})\|^2] \nonumber \\ &+ \frac{ \mathsf{L}^2 S \alpha^3}{2N} \sum_{n=1}^N \sum_{s=0}^{S-1} \sum_{s^{\prime}=0}^{s-1}\EX[\| h_n(\vect{\psi}_{k,s^{\prime}}^n)\|^2] \nonumber \\
    &+\frac{\alpha^2 \mathsf{L} (d + 4) S }{2N}\sum_{n=1}^N \sum\nolimits_{s=0}^{S-1} \EX [\|h_n(\vect{\psi}_{k, s}^{n}) \|^2]. 
\end{align}
}
Rearranging and summing $k$ from $0$ to $K - 1$, we have 
{\small
\begin{align}
    \frac{1}{K} \sum\nolimits_{k=0}^{K-1} &\EX[\|\nabla f(\vect{x}_{k})\|^2] \leq \frac{2}{ K \alpha S}(\EX[f(\vect{x}_{0})] - \EX[f(\vect{x}_{K})])  \nonumber \\ &+ \frac{\mathsf{L}^2  \alpha^2}{NK} \sum_{k=0}^{K-1} \sum_{n=1}^N \sum_{s=0}^{S-1} \sum_{s^{\prime}=0}^{s-1} \EX[\| h_n(\vect{\psi}_{k,s^{\prime}}^n)\|^2] \nonumber \\
    &+\frac{\mathsf{L}\alpha (d + 4)}{KN}\sum_{k=0}^{K-1} \sum_{n=1}^N  \sum_{s=0}^{S-1} \| \nabla h_n(\vect{\psi}_{k,s}^n) \|^2, \nonumber \\
    &\leq \frac{2}{ K\alpha S } (f(\vect{x}_{0}) - f^\star)   \nonumber \\ &+ \frac{\mathsf{L}^2  \alpha^2}{NK} \sum_{k=0}^{K-1} \sum_{n=1}^N \sum_{s=0}^{S-1} \sum_{s^{\prime}=0}^{s-1} \EX[\| h_n(\vect{\psi}_{k,s^{\prime}}^n)\|^2] \nonumber \\
    &+\frac{\mathsf{L}\alpha (d + 4)}{KN}\sum_{k=0}^{K-1} \sum_{n=1}^N  \sum_{s=0}^{S-1} \| \nabla h_n(\vect{\psi}_{k,s}^n) \|^2, \nonumber 
\end{align}
}
Moreover, by invoking Assumption \ref{Assump:2}, we can simplify the bound as the following
\begin{align}\label{result:thm1}
    \frac{1}{K} \sum\nolimits_{k=0}^{K-1} \EX[\|\nabla &f(\vect{x}_{k})\|^2]  \leq \frac{2}{ K \alpha S} (f(\vect{x}_{0}) - f^\star)  \nonumber \\ &+ \mathsf{L}^2 S^2 \alpha^2 G^2   
    +\mathsf{L} \alpha (d + 4)S G^2.
\end{align}
If we set $\alpha = \frac{1}{\sqrt{K}}$ in \eqref{result:thm1}, we have a convergence rate of $O(\frac{d}{\sqrt{K}})$ to a stationary point of $f(\vect{x})$.
\end{proof}

\begin{proof}[Proof of Proposition~\ref{var_reduce}]
Recall the aggregation step (line 13-14) of \mbox{Algorithm \ref{alg2_new}}
\begin{align}
&\vect{x}_{k+1} = \vect{x}_k + \frac{1}{N}\sum\nolimits_{n=1}^N r_{n}^k \vect{v}_{k,n}\nonumber \\ \label{dx}
&\mathbf{d}_{\vect{x}_k} = \frac{1}{N}\sum\nolimits_{n=1}^N r_{n}^k \vect{v}_{k,n},
\end{align}
where $\mathbf{d}_{\vect{x}_k} = \vect{x}_{k+1} - \vect{x}_k$. Recall the variance formula
\begin{align}\label{var_formula}
    \text{Var}[\mathbf{d}_{\vect{x}_k}] = \EX[\mathbf{d}_{\vect{x}_k} \mathbf{d}_{\vect{x}_k}^\top] - \EX[\mathbf{d}_{\vect{x}_k}]\EX[\mathbf{d}_{\vect{x}_k}^\top].
\end{align}
We compute $\text{Var}[\mathbf{d}_{\vect{x}_k}]$ for the cases where $\vect{v}_{k,n}$ is drawn from either a normal or rademacher distribution. First, we begin with the case where $\vect{v}_{k,n}$ is sampled from a normal distribution. Recall, for a random vector $\vect{v}_{k,n}\sim \mathcal{N}(\mathbf{0}, \mathbf{I}_d)$, we have $\EX[\vect{v}_{k,n}] = \vect{0}$ and  $\EX[\vect{v}_{k,n} \vect{v}_{k,n}^\top] = \mathbf{I}_d. $ Compute $\EX[\mathbf{d}_{\vect{x}_k}]$ from \eqref{dx}
{\small
\begin{align*}
    \mathbb{E}[\mathbf{d}_{\vect{x}_k}] &= \mathbb{E} \left[\frac{1}{N} \sum\nolimits_{n=1}^N r_{n}^k \vect{v}_{k,n} \right] = \mathbb{E} \left[\frac{1}{N} \sum\nolimits_{n=1}^N (\delta_{n}^{k^\top} \vect{v}_{k,n}) \vect{v}_{k,n} \right] \\
    & = \mathbb{E} \left[\frac{1}{N} \sum\nolimits_{n=1}^N (\vect{v}_{k,n} \vect{v}_{k,n}^\top) \delta_{n}^{k} \right] = \frac{1}{N} \sum\nolimits_{n=1}^N  \delta_{n}^{k} = \bar{\delta}^k.
\end{align*}
}
Next, compute $\EX[\mathbf{d}_{\vect{x}_k} \mathbf{d}_{\vect{x}_k}^\top]$ in \eqref{var_formula}
\begin{align}\label{first_term}
    \mathbb{E}[\mathbf{d}_{\vect{x}_k} \mathbf{d}_{\vect{x}_k}^\top] = \frac{1}{N^2} \sum\nolimits_{n=1}^N \mathbb{E}[ ( (\delta_n^k)^\top \mathbf{v}_{k,n} )^2 \mathbf{v}_{k,n} \mathbf{v}_{k,n}^\top].
\end{align}
Note that $\vect{v}_k {\vect{v}_{k,n}}^\top
= \sum_{m=1}^{d} \sum_{p=1}^{d} v_{k,m} v_{k,n,p} \mathbf{e}_m \mathbf{e}_p^\top$,
and $(\delta_n^k{}^\top \vect{v}_{k,n})^2
= \left( \sum_{i=1}^d \delta_{n,i}^k v_{k,n,i} \right)^2
= \sum_{i=1}^d \sum_{j=1}^d \delta_{n,i}^k \delta_{n,j}^k v_{k,n,i} v_{k,n,j}$,
where $\mathbf{e}$ denotes the basis vector. Plugging back into \eqref{first_term}, we have 
{\small
\begin{align}
    &\EX[\mathbf{d}_{\vect{x}_k} \mathbf{d}_{\vect{x}_k}^\top] = \frac{1}{N^2}\sum\nolimits_{n=1}^N \EX[ \sum\nolimits_{i=1}^d \sum\nolimits_{j=1}^d \delta_{n,i}^k \delta_{n,j}^k v_{k,n,i} v_{k,n,j}  \nonumber \\
    &\sum\nolimits_{m=1}^{d} \sum\nolimits_{p=1}^{d} v_{k,n,m} v_{k,n,p} \mathbf{e}_m \mathbf{e}_p^\top]= \nonumber \\
    &  \frac{1}{N^2} \sum_{n=1}^N \!\EX[\sum_{i=1}^d \sum_{j=1}^d\! \sum_{m=1}^{d} \!\sum_{p=1}^{d} \!\delta_{n,i}^k \delta_{n,j}^k v_{k,n,i} v_{k,n,j}   v_{k,n,m} v_{k,n,p} \mathbf{e}_m \mathbf{e}_p^\top] \nonumber\\ \label{non-zero}
    &= \frac{1}{N^2} \!\!\sum_{n=1}^N \!\sum_{i=1}^d \!\sum_{j=1}^d\! \sum_{m=1}^{d} \!\sum_{p=1}^{d} \!\delta_{n,i}^k \!\delta_{n,j}^k\! \EX [v_{k,n,i} v_{k,n,j}  v_{k,n,m} v_{k,n,p}] \mathbf{e}_m \mathbf{e}_p^\top.
\end{align}
}
Note that, the last equality comes from the fact that $\vect{v}_{k,n}$ is independent from $\delta_n^k$. Now, there is different cases that $\EX [v_{k,n,i} v_{k,n,j}  v_{k,n,m} v_{k,n,p}]$ is non-zero in \eqref{non-zero}. \\
Case 1 ($i=j$ and $m=p$): In this case, \eqref{non-zero} simplifies to $\EX[\mathbf{d}_{\vect{x}_k} \mathbf{d}_{\vect{x}_k}^\top] = \frac{1}{N^2} \sum_{n=1}^N \sum_{i=1}^d  \sum_{m=1}^{d} (\delta_{n,i}^{k})^{2} \EX [v_{k,n,i}^2] \EX [v_{k,n,m}^2] \mathbf{e}_m \mathbf{e}_m^\top 
    = \frac{1}{N^2}\sum\nolimits_{n=1}^N \|\delta_n^k \|^2 \mathbf{I}_d$.
    
Case 2 ($i=m$ and $j=p$): In this case, \eqref{non-zero} simplifies to $
    \mathbb{E}[\mathbf{d}_{\vect{x}_k} \mathbf{d}_{\vect{x}_k}^\top] = \frac{1}{N^2} \sum_{n=1}^N \sum_{i=1}^d \sum_{j=1}^d \delta_{n,i}^k \delta_{n,j}^k \mathbb{E} [v_{k,n,i}^2] \mathbb{E}[v_{k,n,j}^2] \mathbf{e}_i \mathbf{e}_j^\top 
    = \frac{1}{N^2} \sum\nolimits_{n=1}^N \delta_{n}^k (\delta_{n}^k)^\top.$

Case 3 ($i=p$ and $m=j$): Similar to case 2, \eqref{non-zero} simplifies to
$
\mathbb{E}[\mathbf{d}_{\vect{x}_k} \mathbf{d}_{\vect{x}_k}^\top]
= \frac{1}{N^2} \sum\nolimits_{n=1}^N \delta_{n}^k (\delta_{n}^k)^\top.
$

Case 4 ($i=j=m=p$): In this case, \eqref{non-zero} simplifies to the following 
$\EX[\mathbf{d}_{\vect{x}_k} \mathbf{d}_{\vect{x}_k}^\top] =\frac{1}{N^2} \sum\nolimits_{n=1}^N \sum\nolimits_{i=1}^d  (\delta_{n,i}^k)^2 \EX [v_{k,n,i}^4] \mathbf{e}_i \mathbf{e}_i^\top= \frac{3}{N^2} \sum\nolimits_{n=1}^N  \|\delta_n^k\|^2 \mathbf{I}_d$. The fourth moment, i.e., $\EX[v_{k,n,i}^4]$ can be computed from \cite{isserlis1918formula}. Then, \eqref{non-zero} can be simplified as follows 
\begin{align*}
    \mathbb{E}[\mathbf{d}_{\vect{x}_k} \mathbf{d}_{\vect{x}_k}^\top] = \frac{2}{N^2} \sum\nolimits_{n=1}^N \delta_{n}^k (\delta_{n}^k)^\top + \frac{4}{N^2} \sum\nolimits_{n=1}^N \|\delta_n^k\|^2 \mathbf{I}_d.
\end{align*}

As a result, \eqref{var_formula} can be written as follows for the case where $\vect{v}_{k,n}$ drawn from a normal distribution with zero mean and unit variance.
\begin{align}\label{final_normal}
    \text{Var}_{\vect{v}_{k,n} \sim \mathcal{N}(\mathbf{0}, \mathbf{I}_d)}[\mathbf{d}_{\vect{x}_k}] = & \frac{2}{N^2} \sum\nolimits_{n=1}^N \delta_{n}^k (\delta_{n}^k)^\top \\
    & + \frac{4}{N^2} \sum\nolimits_{n=1}^N \|\delta_n^k\|^2 \mathbf{I}_d - \bar{\delta}^k (\bar{\delta}^k)^\top. \nonumber
\end{align}

Now, we compute $\text{Var}[\mathbf{d}_{\vect{x}_k}]$ for the case $\vect{v}_{k,n} \sim \text{Rademacher}^d$. It is important to note that for a random vector $\vect{v}_{k,n} \sim \text{Rademacher}^d$, we still have $ \EX[\vect{v}_{k,n}] = 0$ and  $\EX[\vect{v}_{k,n} \vect{v}_{k,n}^\top] = \mathbf{I}_d$. Similar to the proof for the normal distribution, cases $1, 2$, and $3$ are identical. However in the fourth case, where $i=j=m=p$, since the fourth moment of a Rademacher distribution is $1$, we get $\EX[\mathbf{d}_{\vect{x}_k} \mathbf{d}_{\vect{x}_k}^\top] = \frac{1}{N^2} \sum_{n=1}^N  \|\delta_n^k\|^2 \mathbf{I}_d$. Then, \eqref{non-zero} can be simplified as follows 
\begin{align*}
    \mathbb{E}[\mathbf{d}_{\vect{x}_k} \mathbf{d}_{\vect{x}_k}^\top] = \frac{2}{N^2} \sum\nolimits_{n=1}^N \delta_{n}^k (\delta_{n}^k)^\top + \frac{2}{N^2} \sum\nolimits_{n=1}^N \|\delta_n^k\|^2 \mathbf{I}_d.
\end{align*}

Thus, \eqref{var_formula} can be written as follows for the case where $\vect{v}_{k,n} \sim \text{Rademacher}^d$
\begin{align}\label{final_radem}
    \text{Var}_{\vect{v}_{k,n} \sim \text{Rademacher}^d}[\mathbf{d}_{\vect{x}_k}] = & \frac{2}{N^2} \sum\nolimits_{n=1}^N  \delta_{n}^k (\delta_{n}^k)^{\top}  \\
    & + \frac{2}{N^2} \sum\nolimits_{n=1}^N  \|\delta_n^k\|^2 \mathbf{I}_d - \bar{\delta}^k \bar{\delta}^{k \top}. \nonumber
\end{align}

From \eqref{final_normal} and \eqref{final_radem}, we have the following
\begin{align}
    \text{Var}_{\vect{v}_{k,n} \sim \mathcal{N}(\mathbf{0}, \mathbf{I}_d)}[\mathbf{d}_{\vect{x}_k}] & - \text{Var}_{\vect{v}_{k,n} \sim \text{Rademacher}^d}[\mathbf{d}_{\vect{x}_k}] \nonumber \\ 
    &= \frac{2}{N^2} \sum\nolimits_{n=1}^N  \|\delta_n^k\|^2 \mathbf{I}_d
\end{align}
which concludes the proof.
 \end{proof}




    


\medskip

\newcounter{mycounter}
\renewcommand{\themycounter}{A.\arabic{mycounter}}
\newtheorem{thmapp}[mycounter]{Theorem}

\end{document}